\documentclass[10pt,conference]{IEEEtran}
\linespread{1}
\makeatletter
 \def\blfootnote{\xdef\@thefnmark{}\@footnotetext}
 \makeatother

\usepackage{array}
\usepackage{amsthm}
\usepackage{bbm}
\usepackage[justification=centering]{caption}
\usepackage[usenames]{color}
\usepackage[noadjust]{cite}
\ifCLASSINFOpdf
   \usepackage[pdftex]{graphicx}
   \usepackage{epstopdf}
\else
  \usepackage[dvips]{graphicx}
\fi
\usepackage[cmex10]{amsmath}
\usepackage[]{amssymb}
\usepackage{algorithmic}
\usepackage{algorithm}
\usepackage{mdwmath}
\usepackage{mdwtab}
\usepackage{subfigure}
\usepackage{url}
\usepackage{color}
\usepackage[verbose]{wrapfig}
\usepackage{hyperref}

\allowdisplaybreaks[4]
\newtheorem{theorem}{Theorem}
\newtheorem{Definition}{Definition}

\newtheorem{Lemma}{Lemma}
\newtheorem{Remark}{Remark}

\newtheorem{Assumption}{Assumption}

\newtheorem*{Example*}{Example}

\begin{document}
\setlength{\abovedisplayskip}{0pt}
\setlength{\belowdisplayskip}{0pt}
\title{Decentralized Differentially Private Without-Replacement Stochastic Gradient Descent}
\newcommand*{\affaddr}[1]{#1} 
\newcommand*{\affmark}[1][*]{\textsuperscript{#1}}
\newcommand*{\email}[1]{\texttt{#1}}

\author{
    \IEEEauthorblockN{Richeng Jin, \textit{Member, IEEE}, Xiaofan He, \textit{Senior Member, IEEE},
    Huaiyu Dai, \textit{Fellow, IEEE}}
}

\maketitle
\begin{abstract}
While machine learning has achieved remarkable results in a wide variety of domains, the training of models often requires large datasets that may need to be collected from different individuals. As sensitive information may be contained in the individual’s dataset, sharing training data may lead to severe privacy concerns. Therefore, there is a compelling need to develop privacy-aware machine learning methods, for which one effective approach is to leverage the generic framework of differential privacy. Considering that stochastic gradient descent (SGD) is one of the most commonly adopted methods for large-scale machine learning problems, a decentralized differentially private SGD algorithm is proposed in this work. Particularly, we focus on SGD without replacement due to its favorable structure for practical implementation. Both privacy and convergence analysis are provided for the proposed algorithm. Finally, extensive experiments are performed to demonstrate the effectiveness of the proposed method.
\end{abstract}
{\blfootnote{R. Jin is with the Zhejiang–Singapore Innovation and AI Joint Research Lab and the College of Information Science and Electronic Engineering, Zhejiang University, Hangzhou, China, 310000 (e-mail: richengjin@zju.edu.cn).
X. He is with the Electronic Information School, Wuhan University, Wuhan, China, 430000 (e-mail: xiaofanhe@whu.edu.cn).
H. Dai is with the Department of Electrical and Computer Engineering, North Carolina State University, Raleigh, NC, USA, 27695 (e-mail: hdai@ncsu.edu).
}}

\section{Introduction}
\noindent With the rapid development of wireless sensor networks and smart devices, it is nowadays becoming easier to collaboratively collect data from multiple devices for data processing and analysis. For example, as an important emerging application, health monitoring systems have drawn a lot of attention (e.g., see \cite{esteva2019guide} and the references therein). In a health monitoring system, wearable sensors are used to collect the patients' health data, which are later utilized to develop disease prediction models through machine learning techniques. Considering the size of the systems and the sensitivity of the collected data, there is a compelling need to design efficient decentralized data processing methods. Compared to centralized data processing, the decentralized approaches mainly have two advantages. Firstly, decentralization can offer better scalability by exploiting local computational resource of the smart devices. Secondly, considering that data collected from individuals (e.g., medical and financial records) are sensitive and private, decentralized processing is able to avoid direct data sharing between individual devices and the (possibly) untrusted central node, leading to improved privacy.


Due to its simplicity and scalability, stochastic gradient descent (SGD) has been extensively studied in the literature \cite{rakhlin2012making}. SGD admits decentralized implementation by allowing the individuals to compute and share the gradients derived from their local training samples, and hence is suitable for various collaborative learning applications. However, sharing the local gradients may jeopardize the privacy of the users, since an adversary may be able to infer the private local data (e.g., the health information) from the shared gradients \cite{ren2022grnn}. With such consideration, differential privacy \cite{dwork2006calibrating} has been incorporated into SGD to guarantee a quantifiable level of privacy.

Various differentially private SGD algorithms have been proposed, among which one of the most popular approaches is adding noise to the gradients in the training process (e.g., \cite{abadi2016deep,hegedus2016distributed,gong2020privacy,kim2021federated,yu2021privacy,el2022differential} and the references therein). Most of the existing works adopt the commonly used independent and identically distributed (i.i.d.) sampling method \cite{dekel2012optimal}, in which the training examples are sampled in an i.i.d fashion during each training iteration. Nonetheless, in practical implementations of SGD algorithms, without-replacement sampling is often easier and faster, as it allows sequential data access \cite{shamir2016without}. More specifically, let $i_t$ and $[n]$ denote the index of the training sample used at time $t$ and the whole training dataset, respectively. The mathematical description for i.i.d. sampling is $P(i_t=j)=1/n, \forall j\in[n]$; for without-replacement sampling, it is $P(i_t=j)=1/(n-t+1),\forall j\in[n]/\{i_1,\cdots,i_{t-1}\}$. It has been shown that without-replacement sampling is strictly better than i.i.d. sampling after sufficiently many passes over the dataset under smoothness and strong convexity assumptions \cite{gurbuzbalaban2015random}. \cite{wu2017bolt} considers without-replacement sampling for differentially private SGD. However, it adds noise to the trained models and assumes that the data is held centrally, which cannot be generalized to the decentralized setting directly.


With such consideration, in this work, a decentralized without-replacement sampling SGD algorithm with both privacy and convergence guarantees are proposed. We consider a scenario in which multiple nodes with limited numbers of training samples aim to learn a global model over the whole dataset (i.e., all the training samples from the nodes). It is assumed that each node has two models: a local model that is only available to itself and a global model that is known to the public. At each iteration, a node decides to update either the local model or the global model. To fulfill privacy-aware decentralized SGD, each node adds noise when it updates the global model. As a result, the global model is not necessary better than the local model, especially in the high privacy requirement settings. Therefore, we leverage the deep-Q learning
framework \cite{mnih2015human} to help determine whether each node updates the global model or not during each iteration.

The remainder of this paper is organized as follows. Section~\ref{preliminaries} reviews preliminaries and notations used in this work. The problem is formulated and presented in Section~\ref{ProblemFormulation}. Section~\ref{DQCDDPSGD} presents the proposed algorithm, and its effectiveness is examined through simulations in Section~\ref{Numerical results}. Conclusions and future works are presented in Section~\ref{Conclusions and Future Works}.

\section{Preliminaries and Notations}\label{preliminaries}
\noindent In this section, we start by reviewing some important definitions and existing results.

\subsection{Machine Learning and Stochastic Gradient Descent}
\noindent Suppose that there is a training data set $\mathcal{S}=\{(x_1,y_1),\cdots,(x_n,y_n)\}$ with $n$ training instances i.i.d. sampled from a sample space $\mathcal{Z} = \mathcal{X} \times \mathcal{Y}$, where $\mathcal{X}$ is a space of feature vectors and $\mathcal{Y}$ is a label space. Let $\mathcal{W} \subseteq \mathbb{R}^{d}$ be a hypothesis space equipped with the standard inner product and 2-norm $||\cdot||$. The goal is to learn a good prediction model $h(w) \in \mathcal{F}: \mathcal{X} \rightarrow \mathcal{Y}$ which is parameterized by $w \in \mathcal{W}$. The prediction accuracy is measured by a loss function $f: \mathcal{W} \times \mathcal{Z} \rightarrow \mathbb{R}$. Given a hypothesis $w \in \mathcal{W}$ and a training sample $(x_i,y_i) \in \mathcal{S}$, we have a loss $f(w,(x_i,y_i))$. SGD \cite{rakhlin2012making} is a popular optimization algorithm, which aims to minimize the empirical risk $F(w) = \frac{1}{n}\sum_{i=1}^{n}f(w,(x_i,y_i))$ over the training dataset $\mathcal{S}$ of $n$ samples and obtain the optimal hypothesis $w^{*}=\arg\min_{w} F(w)$. For simplicity, let $f_{i}(w)=f(w,(x_i,y_i))$ for fixed $\mathcal{S}$. In each iteration, given a training sample $(x_t,y_t)$, SGD updates the hypothesis $w_t$ by:
\begin{equation}
w_{t+1} = G_{f_t,\eta_t} = w_{t} - \eta_{t}\nabla f_{t}(w_t),
\end{equation}
\noindent in which $\eta_{t}$ is the learning rate and $\nabla f_{t}(w_t) = \nabla f(w_t,(x_t,y_t))$ is the gradient. We will denote $G_{f_t,\eta_t}$ as $G_{t}$ for ease of presentation.

In order to perform the convergence analysis later, some basic properties of loss functions are defined as follows.


\begin{Definition}
Let $f:\mathcal{W} \rightarrow \mathbb{R}$ be a function: $f$ is convex if for any $u,v \in \mathcal{W}$, $f(u) \geq f(v)+\langle\nabla f(v),u-v\rangle$; $f$ is L-Lipschitz if for any $u,v \in \mathcal{W}$, $||f(u)-f(v)|| \leq L||u-v||$; $f$ is $\gamma$-strongly convex if for any $u,v \in \mathcal{W}$, $f(u)\geq f(v)+\langle\nabla f(v),u-v\rangle + \frac{\gamma}{2}||u-v||^2$; $f$ is $\mu$-smooth if for any $u,v \in \mathcal{W}$, $\nabla f(u)- \nabla f(v) \leq \mu||u-v||$.
\end{Definition}

\noindent \textbf{Example}: \textbf{Logistic Regression}. The above three parameters $(L,\gamma,\mu)$ can be derived by analyzing the specific loss function. Here, we give an example using the popular $L_2$-regularized logistic regression model with the $L_2$ regularization parameter $\lambda \leq 0$, which can also be found in \cite{wu2017bolt}. Assuming that each feature vector is normalized before processing, i.e., $||x|| \leq 1$, the loss function (for $L_2$-regularized logistic regression model) on a sample $(x,y)$ with $y \in \{+1,-1\}$ is defined as follows:
\begin{equation}\label{lossfunction}
  f(w,(x,y)) = \ln(1+\exp(-y<w,x>)) + \frac{\lambda}{2}||w||^2.
\end{equation}

If $\lambda > 0$, the loss function $f(w,(x,y))$ is strongly convex. Suppose the norm of the hypothesis is bounded by $R$, i.e., $||w|| \leq R$, then it can be proved that $L=1+\lambda R, \mu = 1+ \lambda$ and $\gamma = \lambda$. If $\lambda = 0$, the loss function is only convex, and we can deduce that $L = \mu = 1$ and $\gamma = 0$.

We now introduce some important properties of gradient descent updates that will be used in the convergence and privacy analyses of the proposed algorithm.

\begin{Definition}\label{Definition2}
Let $G:\mathcal{W} \rightarrow \mathcal{W}$ be an operator that maps a hypothesis to another hypothesis. $G$ is $\rho$-expansive if $sup_{w,w'}\frac{||G(w)-G(w')||}{||w-w'||} \leq \rho$ and $\sigma$-bounded if $sup_{w}||G(w)-w|| \leq \sigma$.
\end{Definition}


\begin{Lemma}\cite{wu2017bolt}\label{CONVEXANDSTRONGLY}
Assume that $f$ is $\mu$-smooth, if $f$ is convex, then for any $\eta \leq \frac{2}{\mu}$, $G_{f,\eta}$ is 1-expansive; if $f$ is $\gamma$-strongly convex, then for $\eta \leq \frac{1}{\mu}$, $G_{f,\eta}$ is $(1-\eta \gamma)$-expansive.
\end{Lemma}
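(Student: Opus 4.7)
The plan is to prove both bullets by expanding the squared norm $\|G_{f,\eta}(w)-G_{f,\eta}(w')\|^2$ and then invoking the appropriate consequence of convexity/strong convexity together with the smoothness assumption. Throughout, write
\[
G_{f,\eta}(w)-G_{f,\eta}(w')=(w-w')-\eta\bigl(\nabla f(w)-\nabla f(w')\bigr).
\]

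For the first bullet, I would take the squared Euclidean norm and get the three-term expansion
\[
\|G_{f,\eta}(w)-G_{f,\eta}(w')\|^2=\|w-w'\|^2-2\eta\langle \nabla f(w)-\nabla f(w'),w-w'\rangle+\eta^2\|\nabla f(w)-\nabla f(w')\|^2.
\]
The key ingredient is the Baillon--Haddad co-coercivity property: if $f$ is convex and $\mu$-smooth, then $\langle \nabla f(w)-\nabla f(w'),w-w'\rangle\ge \tfrac{1}{\mu}\|\nabla f(w)-\nabla f(w')\|^2$. Plugging this in, the last two terms collapse to $\eta(\eta-\tfrac{2}{\mu})\|\nabla f(w)-\nabla f(w')\|^2$, which is nonpositive exactly under the hypothesis $\eta\le 2/\mu$; this immediately yields $1$-expansivity.

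For the second bullet, the natural move is to ``peel off'' the strongly convex part by writing $f=g+\tfrac{\gamma}{2}\|\cdot\|^2$, so that $g$ is convex and $(\mu-\gamma)$-smooth and
\[
G_{f,\eta}(w)=(1-\eta\gamma)w-\eta\nabla g(w).
\]
Then the same three-term expansion becomes
\[
\|G_{f,\eta}(w)-G_{f,\eta}(w')\|^2=(1-\eta\gamma)^2\|w-w'\|^2-2\eta(1-\eta\gamma)\langle \nabla g(w)-\nabla g(w'),w-w'\rangle+\eta^2\|\nabla g(w)-\nabla g(w')\|^2.
\]
Applying co-coercivity to $g$ with modulus $\mu-\gamma$, the cross/quadratic-in-gradient terms combine into $\eta\bigl(\eta-\tfrac{2(1-\eta\gamma)}{\mu-\gamma}\bigr)\|\nabla g(w)-\nabla g(w')\|^2$, and an elementary algebraic simplification shows this is nonpositive iff $\eta(\mu+\gamma)\le 2$. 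Since $\gamma\le\mu$ the stated hypothesis $\eta\le 1/\mu$ implies $\eta\le 2/(\mu+\gamma)$, so the extra term drops out and $\|G_{f,\eta}(w)-G_{f,\eta}(w')\|\le(1-\eta\gamma)\|w-w'\|$ follows (using also that $1-\eta\gamma\ge 0$). The degenerate case $\mu=\gamma$ is handled separately, since then $\nabla g$ is constant and the second and third terms vanish automatically.

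The main obstacle is the tight constant in the strongly convex case: a crude bound using only $\gamma$-strong convexity and $\mu$-smoothness on the gradient difference yields $\sqrt{1-2\eta\gamma+\eta^2\mu^2}$, which is strictly weaker than $1-\eta\gamma$ and loses the desired contraction factor. The decomposition $f=g+\tfrac{\gamma}{2}\|\cdot\|^2$ together with co-coercivity for $g$ is what one really needs. As an alternative (under twice differentiability), one can write $\nabla f(w)-\nabla f(w')=H(w-w')$ with $H=\int_0^1 \nabla^2 f(w'+t(w-w'))\,dt$; then $G_{f,\eta}(w)-G_{f,\eta}(w')=(I-\eta H)(w-w')$, and since the eigenvalues of $H$ lie in $[\gamma,\mu]$ the operator norm of $I-\eta H$ is at most $\max(|1-\eta\gamma|,|1-\eta\mu|)=1-\eta\gamma$ whenever $\eta\le 1/\mu$ (this bound also immediately recovers the convex case by sending $\gamma\to 0$ and relaxing to $\eta\le 2/\mu$).
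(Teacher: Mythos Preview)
Your argument is correct. Note, however, that the paper does not supply its own proof of this lemma: it is stated as a preliminary result with citations to \cite{hardt2015train,nesterov2013introductory,polyakintroduction}, so there is no in-paper proof to compare against. Your approach---expanding $\|G_{f,\eta}(w)-G_{f,\eta}(w')\|^2$ and invoking the co-coercivity (Baillon--Haddad) inequality for the convex case, then peeling off the quadratic $\tfrac{\gamma}{2}\|\cdot\|^2$ and applying co-coercivity to the residual $g$ in the strongly convex case---is exactly the standard route taken in the cited source \cite{hardt2015train} (see their Lemma~3.7). The alternative Hessian argument you sketch at the end (via $I-\eta H$ and the spectral bound) is also classical and appears, e.g., in \cite{nesterov2013introductory,polyakintroduction}; it gives the same constants but needs twice differentiability, whereas the co-coercivity route does not.
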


\begin{Lemma}\label{BOUNDED}
Suppose that $f$ is L-Lipschitz, then the gradient update $G_{f,\eta}$ is ($\eta$L)-bounded.
\end{Lemma}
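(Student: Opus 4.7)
The plan is to unpack the definition of $G_{f,\eta}$ and combine it with the Lipschitz assumption to obtain the bound. First I would compute the displacement directly: by the update rule $G_{f,\eta}(w) = w - \eta \nabla f(w)$ introduced earlier, we have $G_{f,\eta}(w) - w = -\eta \nabla f(w)$, so taking norms gives $||G_{f,\eta}(w) - w|| = \eta \, ||\nabla f(w)||$. The lemma therefore reduces to the pointwise gradient bound $||\nabla f(w)|| \leq L$ for every $w \in \mathcal{W}$.

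Second, I would derive this gradient bound from the Lipschitz property. The assumption $|f(u) - f(v)| \leq L\,||u - v||$ together with differentiability of $f$ implies that $||\nabla f(w)|| \leq L$ at every point. Concretely, at any $w$ with $\nabla f(w) \neq 0$, pick the unit direction $u = \nabla f(w)/||\nabla f(w)||$ and evaluate the one-sided directional derivative:
$$||\nabla f(w)|| = \langle \nabla f(w), u \rangle = \lim_{t \to 0^+} \frac{f(w + tu) - f(w)}{t} \leq \lim_{t \to 0^+} \frac{L\,||tu||}{t} = L,$$
where the inequality uses the Lipschitz condition with $u' = w + tu$ and $v' = w$. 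The case $\nabla f(w) = 0$ is trivial. Substituting back yields $||G_{f,\eta}(w) - w|| \leq \eta L$, and taking the supremum over $w$ gives $(\eta L)$-boundedness in the sense of Definition~\ref{Definition2}.

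There is essentially no obstacle in this proof; it is a routine unpacking of definitions. The only mildly subtle point is that the paper's Lipschitz condition is stated on function values rather than on the gradient, so one needs the short directional-derivative argument above to convert between the two formulations. Everything else (the form of $G_{f,\eta}$, homogeneity of the norm, and the definition of boundedness) is immediate from what is stated in the excerpt.
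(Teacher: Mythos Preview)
Your proof is correct. The paper does not actually supply a proof of this lemma; it states the result without argument (in the same spirit as the preceding lemmas, which are cited from prior work). Your derivation is exactly the standard one: compute $G_{f,\eta}(w)-w=-\eta\nabla f(w)$ and use the fact that an $L$-Lipschitz differentiable function has gradients bounded in norm by $L$, which you justify cleanly via the directional-derivative argument.
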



\begin{Lemma} (Growth Recursion \cite{hardt2015train})\label{GROWTH RECURSION}
Fix any two sequences of updates $G_1,\cdots,G_T$ and $G'_1,\cdots,G'_T$. Let $w_0=w'_0$, $w_t=G_t(w_{t-1})$ and $w'_t=G'_t(w'_{t-1})$ for $t=1,\cdots,T$. Then $||w_0-w'_0||=0$ and for $0 \le t \leq T$
\begin{equation}
||w_t-w'_t|| \leq
\begin{cases}
\rho||w_{t-1}-w'_{t-1}||, ~\text{if $G_t=G'_t$ is $\rho$-expansive.}\\
min(\rho,1)||w_{t-1}-w'_{t-1}||+2\sigma_{t},~\text{if $G_t$ is} \\
~~~~~\text{$\rho$-expansive; $G_t$ and $G'_t$ are $\sigma_{t}$-bounded.}
\end{cases}
\end{equation}

\end{Lemma}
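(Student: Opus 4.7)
The plan is to set $\delta_t = \|w_t - w'_t\|$, observe that $\delta_0 = 0$ follows immediately from $w_0 = w'_0$, and then establish the two cases separately by direct manipulation. Since the updates are applied coordinatewise in $t$, there is no need for an induction; each $\delta_t$ is bounded using only $w_{t-1}$, $w'_{t-1}$, and the structural properties of $G_t, G'_t$ at that step.

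For the first case, when $G_t = G'_t$ and this common operator is $\rho$-expansive, I would simply write
\begin{equation*}
\delta_t = \|G_t(w_{t-1}) - G_t(w'_{t-1})\| \le \rho\,\|w_{t-1}-w'_{t-1}\|,
\end{equation*}
which is the expansiveness inequality from Definition~\ref{Definition2}. This disposes of the first branch.

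For the second case, where $G_t$ and $G'_t$ may differ but are both $\sigma_t$-bounded and $G_t$ is $\rho$-expansive, I would derive two upper bounds on $\delta_t$ and then take the smaller. The first bound inserts $G_t(w'_{t-1})$ as a pivot and applies the triangle inequality:
\begin{equation*}
\delta_t \le \|G_t(w_{t-1}) - G_t(w'_{t-1})\| + \|G_t(w'_{t-1}) - G'_t(w'_{t-1})\|.
\end{equation*}
The first summand is at most $\rho\|w_{t-1}-w'_{t-1}\|$ by expansiveness, while the second is at most $\|G_t(w'_{t-1}) - w'_{t-1}\| + \|w'_{t-1} - G'_t(w'_{t-1})\| \le 2\sigma_t$ by two applications of boundedness, yielding $\delta_t \le \rho\|w_{t-1}-w'_{t-1}\| + 2\sigma_t$. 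The second bound pivots through $w_{t-1}$ and $w'_{t-1}$ themselves:
\begin{equation*}
\delta_t \le \|G_t(w_{t-1}) - w_{t-1}\| + \|w_{t-1}-w'_{t-1}\| + \|w'_{t-1} - G'_t(w'_{t-1})\| \le \|w_{t-1}-w'_{t-1}\| + 2\sigma_t,
\end{equation*}
again using the bounded property twice. Combining the two bounds produces the stated $\min(\rho,1)$ coefficient on $\|w_{t-1}-w'_{t-1}\|$.

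There is no genuine obstacle here; the only mildly non-obvious point is recognizing that when $\rho \ge 1$ the expansiveness bound is wasteful and one should instead use the raw triangle inequality through $w_{t-1}$ and $w'_{t-1}$, whereas when $\rho < 1$ the expansiveness bound is tighter. Taking the minimum of the two elementary bounds is what produces the $\min(\rho,1)$ factor, and this choice is precisely what makes the recursion useful when subsequent steps alternate between contractive and possibly expansive behavior.
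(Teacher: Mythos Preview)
Your proof is correct and is precisely the standard argument from Hardt, Recht, and Singer. The paper itself does not supply a proof of this lemma; it is quoted verbatim as a known result from \cite{hardt2015train}, so there is no paper-internal argument to compare against. Your two-pivot derivation of the $\min(\rho,1)$ coefficient matches the original source.
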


\subsection{Differential Privacy}
\noindent In this subsection, we start by reviewing the definition of differential privacy, and then introduce the Gaussian mechanism that ensures $(\epsilon,\delta)$-differential privacy.


\begin{Definition}
A (randomized) algorithm A is said to be ($\epsilon,\delta$)-differentially private if for any neighboring datasets $S,S'$, and any event $E \subseteq Range(A)$, $Pr[A(S)\in E]\leq e^\epsilon Pr[A(S')\in E]+\delta$, in which $Range(A)$ is the codomain that consists of all the possible outputs of $A$.
\end{Definition}


\begin{theorem}\label{AddingNoise}
\cite{dwork2014algorithmic} Let $q$ be a deterministic query that maps a dataset to a vector in $\mathbb{R}^d$. For $c^2 \ge 2ln(1.25/\delta)$, adding Gaussian noise sampled according to
\begin{equation}\label{Noise}
  \mathcal{N}(0,\sigma^2); \sigma \geq \frac{c\Delta_2(q)}{\epsilon}
\end{equation}
ensures ($\epsilon,\delta$)-differential privacy for $\epsilon \in (0,1)$, in which $\Delta_2(q)=max_{S \sim S'}||q(S)-q(S')||$ is the $L_2$-sensitivity.
\end{theorem}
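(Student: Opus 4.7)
The plan is to follow the classical Gaussian mechanism argument. The starting observation is that, since the noise $\mathcal{N}(0,\sigma^2 I)$ is spherically symmetric, I only need to compare the densities of $\mathcal{N}(q(S),\sigma^2 I)$ and $\mathcal{N}(q(S'),\sigma^2 I)$ along the one-dimensional axis connecting $q(S)$ and $q(S')$. Writing $\Delta = \|q(S)-q(S')\| \le \Delta_2(q)$, the problem collapses to comparing $p_0 = \mathcal{N}(0,\sigma^2)$ against $p_\Delta = \mathcal{N}(\Delta,\sigma^2)$ on $\mathbb{R}$.

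Next, I would introduce the privacy loss random variable $L(x) = \ln\bigl(p_0(x)/p_\Delta(x)\bigr)$, which simplifies to $L(x) = \frac{2\Delta x - \Delta^2}{2\sigma^2}$ and is itself a (shifted) Gaussian when $x \sim p_0$. The event $\{L(x) > \epsilon\}$ is therefore equivalent to a one-sided tail event $\{x > \sigma^2\epsilon/\Delta + \Delta/2\}$. I would then invoke the standard Gaussian tail bound $\Pr[x > t\sigma] \le \frac{1}{\sqrt{2\pi}\,t}\exp(-t^2/2)$ and substitute $t = \sigma\epsilon/\Delta + \Delta/(2\sigma)$. Using $\sigma \ge c\Delta/\epsilon$ and $\epsilon \in (0,1)$, the leading exponential term becomes approximately $\exp(-c^2/2)$, and the prefactor together with the mild cross terms can be absorbed by the constant $1.25$ in place of $1$. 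The requirement $c^2 \ge 2\ln(1.25/\delta)$ is precisely what makes this tail probability at most $\delta$.

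Having established $\Pr_{x\sim p_0}[L(x) > \epsilon] \le \delta$, I close the argument with the standard "bad event" decomposition. Let $B = \{x : L(x) > \epsilon\}$; for any measurable $E \subseteq \mathrm{Range}(A)$,
\begin{equation*}
\Pr[A(S)\in E] = \Pr[A(S)\in E\setminus B] + \Pr[A(S)\in E\cap B] \le e^\epsilon \Pr[A(S')\in E\setminus B] + \delta \le e^\epsilon \Pr[A(S')\in E] + \delta,
\end{equation*}
which is exactly $(\epsilon,\delta)$-differential privacy. Note that the reduction to the 1D case gives an upper bound on $\Pr[A(S)\in E]/\Pr[A(S')\in E]$ that is monotone in $\Delta$, so it suffices to check the extremal case $\Delta = \Delta_2(q)$.

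The main technical obstacle is the tail-bound calculation: one must track the quadratic-in-$\Delta/\sigma$ correction to $\exp(-c^2/2)$ carefully and verify that the prefactor $\frac{1}{\sqrt{2\pi}\,t}$ does not cost more than the slack afforded by $1.25$ versus $1$. The remaining spherical-symmetry reduction and the bad-event decomposition are essentially bookkeeping, so the proof is almost entirely a careful one-dimensional Gaussian computation.
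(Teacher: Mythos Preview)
Your proposal is correct and follows the standard Gaussian mechanism argument from Dwork and Roth, which is exactly the reference the paper cites. Note, however, that the paper does not give its own proof of this theorem at all: it is quoted verbatim as a known result from \cite{dwork2014algorithmic}, so there is no paper-side argument to compare against---your write-up simply reproduces the classical proof from that source.
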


\section{Problem Formulation}\label{ProblemFormulation}
\noindent In this work, a network consisting of $M$ computational nodes is considered. Each node in the network has a local dataset of $\frac{n}{M}$ training samples, and the set of all the training samples from all the nodes in the network form the global training dataset. The goal of the nodes is to collaboratively learn a hypothesis $w$ that minimizes the empirical risk $F(w) = \frac{1}{n}\sum_{i=1}^{n}f(w,(x_i,y_i))$ over the whole training dataset. It is assumed that each node stores two models: a local model (i.e., a local hypothesis $w^{L}$) that is only known to itself and a global model (i.e., a global hypothesis $w^{G}$) that is shared among all the nodes in the network. \textcolor{black}{All the nodes know the index of the last node that updates the global model and are able to contact it directly. For instance, the nodes can broadcast a message to the whole network indicating the step count and their indices after they update the global model. This message contains only two integers so the communication overhead is insignificant.} At each iteration, a node randomly samples a mini-batch of training examples from its own local dataset without replacement and determines whether to use and update the global model or not. If a node decides not to update the global model, it simply updates its own local model; otherwise, it first contacts the last node that has updated the global model and fetches the latest global model. Then it updates the global model using its local model and training samples through the SGD method. Since the global model is publicly known, one can infer the training sample $(x,y)$ in (\ref{lossfunction}) given the loss function $f$, previous global model $w_t$ and the updated global model $w_{t+1}$, which leads to privacy concerns and deters the nodes from collaborating. Therefore, each node will add noise to the gradients for privacy preservation.

When the nodes update the global model, they need to contact the other nodes to obtain the latest global model, which induces communication overhead and latency during message passing. Moreover, adding noise may also induce accuracy degradation. Since each node will also learn a local model that is updated without privacy concerns, the local model may sometimes be better than the global model, especially when the privacy requirement is high (i.e., small $\epsilon$). In this sense, \textcolor{black}{each node has to learn a control policy to determine whether to update the global model or not at each iteration}.

\section{Deep-Q Learning based Collaborative Decentralized Differentially Private SGD}\label{DQCDDPSGD}
\noindent In this section, a deep-Q learning based collaborative training scheme is proposed. More specifically, the model learning process is modeled as a Markov Decision Process (MDP) \cite{puterman2014markov}, in which the collaborative nodes are the agents, the current local models and the \textcolor{black}{loss} are the states, and the action for each node is whether updating the global model or not. \textcolor{black}{Reinforcement learning (RL) based methods are commonly used to solve such MDP problems in practice due to two advantages: 1) RL methods do not require prior knowledge of the underlying system dynamics, and 2) the designer is free to choose reward metrics that best match the desired controller performance \cite{ruvolo2009optimization}. There have been some works that employ RL as the controller of optimization algorithms. For example, \cite{daniel2016learning} uses RL to determine the step size of neural network training. Inspired by the success of deep RL methods \cite{mnih2015human}, a deep-Q network is adopted to control the behavior (i.e., updating the local model or the global model) of the nodes.\footnote{\textcolor{black}{The deep-Q based method proposed in this work is only our first attempt to explore the possibility of using RL to work as a controller to guide the learning process of the collaborative nodes. The optimization of the controller remains an interesting future work.}}}

The deep-Q learning algorithm is presented in Algorithm \ref{algorithm3}, in which the nodes act as the agents, and the states of the environment are defined by the local models. There are two possible actions for each node: updating the local model or the global model. The basic idea of deep-Q learning is to approximate the action-value (Q) function in traditional Q-learning by a deep neural network. Since RL is known to be unstable when a nonlinear function approximator (i.e., neural network) is used to represent the Q-function, similar to \cite{mnih2015human}, two neural networks are created for each node. The first network $\theta_{t}$ includes all the updates in the training while the second (target) network $\theta'$ retrieves the Q values and is periodically updated to imitate the first network. Experience replay is also adopted. After each action, the experience (transition) is stored in the replay memory as a tuple of $\langle state, action, reward, next state \rangle$. During each iteration, a random mini-batch of transitions is sampled and used to update $\theta_{t}$. For each transition $(s_j,a_{j}^{m},r_{j},s_{j+1})$, the target network is used to compute the approximated target value $y_{j}=r_{j}+\gamma_{DQ}\max_{a'}\hat{Q}_{m}(s_{j+1},a',\theta')$. Based on the current network $\theta_{t}$ and the state $s_{t}$, the action $a_{t}^{m}$ is determined. 

\textcolor{black}{To this end}, a Deep-Q learning based collaborative differentially private SGD algorithm (i.e., Algorithm \ref{algorithm2}) is proposed. For node $m$, given the training samples $sample_{t}^{m}$ and the current local model $w_{t}^{l_m}$ at time $t$, it \textcolor{black}{obtains the current state $s_{t}=[w_{t}^{L_m},f(w_{t}^{L_m},sample_{t}^{m})]$} and determines to update the global model or the local model via the deep-Q network. After updating the local (or global) model, the \textcolor{black}{updated loss $f(w_{t+1}^{l_m},sample_{t}^{m})$} is used to update the deep-Q network.

\begin{algorithm}[t]
\caption{Deep-Q Learning based Collaborative Decentralized Differentially Private SGD}
\label{algorithm2}
\begin{algorithmic}
\STATE 1. Require: initial vector $w_0^{L_1},\cdots,w_0^{L_M},w_0^G$, size of local mini-batch $b$, number of nodes $M$, total number of training data samples $n$, number of iterations $T$.
\STATE 2. for $t=0,1,\cdots,T$ do
\STATE 3. ~for local nodes $m$:
\STATE 4. ~~if update, run Algorithm \ref{algorithm3} and obtain action $a^{m}_{t} \in \{Local, Global\}$
\begin{itemize}
  \item If Local, obtain the mini-batch $D_{m}(t)$, compute the gradient $\nabla f_{D_{m}(t)}(w^{L_{m}}_t)$ and update its weights $w^{L_{m}}_{t+1} = w^{L_{m}}_t - 2\eta_{t}^{L_m}\nabla f_{D_{m}(t)}(w^{L_{m}}_t)$
  \item If Global, fetch the $w^{G}_t$ from the latest global model, obtain the mini-batch $D_{m}(t)$, and compute $\nabla f_{D_{m}(t)}(w^{G}_{t})$, add noise $N_t$ \textcolor{black}{to the gradient} and then update $w^{G}_{t+1}$ and $w^{L_{m}}_{t+1}$ according to the following rule
      \begin{equation}\label{GlobalUpdate}
      \color{black}
        w^{G}_{t+1} = \frac{w^{G}_{t}+w^{L_{m}}_{t}}{2} - \eta_{t}^{L_m}(\nabla f_{D_{m}(t)}(w^{G}_{t}) + N_{t}),
      \end{equation}
      \begin{equation}
        w^{L_{m}}_{t+1} = w^{G}_{t+1}.
      \end{equation}
\end{itemize}
\STATE 5.~~end if
\STATE 6.~end for
\STATE 7.end for
\end{algorithmic}
\end{algorithm}

\begin{algorithm}
\caption{Deep-Q Learning Algorithm with input $sample_t^{m}$ and $w^{L_m}_{t}$ for node $m$}
\label{algorithm3}
\begin{algorithmic}
\STATE 1. Require: replay memory \textcolor{black}{$RM_m$}, action-value function $Q_{m}$ with weights $\theta_{t}$, target action-value function $\hat{Q}_{m}$ with weights $\theta^{'}$, the previous action of the node $a^{m}_{t-1}$, the previous loss $f_{t-1}$.
\STATE 2. Given the training sample $sample_t^{m}$, set the current state \textcolor{black}{$s_{t}=[w^{L_m}_{t},f(w_{t}^{L_m},sample_{t}^{m})]$} and the previous state \textcolor{black}{$s_{t-1}=[w^{L_m}_{t-1},f(w_{t-1}^{L_m},sample_{t-1}^{m})]$. Set the reward as $r_{t-1}=-f(w_{t}^{L_m},sample_{t-1}^{m})$}.
\STATE 3. Store transition $(s_{t-1},a^{m}_{t-1},r_{t-1},s_{t})$ in \textcolor{black}{$RM_m$}.
\STATE 4. Sample random mini-batch of transitions $(s_{j},a^{m}_{j},r_{j},s_{j+1})$ from \textcolor{black}{$RM_m$}.
\begin{equation}
Set~~y_{j} =
\begin{cases}
r_{j}, ~~if~terminates~at~step~ j+1.\\
r_{j}+\gamma_{DQ}\max_{a'}\hat{Q}_{m}(s_{j+1},a',\theta^{'}),~\text{otherwise},
\end{cases}
\end{equation}
\noindent in which $\gamma_{DQ}$ is the \textcolor{black}{discounting factor}. Perform a gradient descent step on $(y_j-Q_{m}(s_j,a_j,\theta_t))$ w.r.t the network parameter $\theta_{t}$. In addition, reset $\hat{Q}_{m}=Q_{m}$ every $C$ steps.
\STATE 5. With probability $p_{explr}$ select a random action $a^{m}_{t}$, otherwise select $a^{m}_{t}=\arg max_{a}Q(s_{t},a,\theta_{t})$.
\STATE 6. Feed $a^{m}_{t}$ to Algorithm \ref{algorithm2}.
\end{algorithmic}
\end{algorithm}

Note that in Algorithm \ref{algorithm2}, the privacy concern only exists when the nodes update the global model. In (\ref{GlobalUpdate}), there are two terms that may lead to privacy leakage: $w^{L_{m}}_{t}$ and $\nabla f_{D_{m}(t)}(w^{G}_t)$. Suppose that the latest time that node $m$ updates the global model is $t-j-1$ and therefore $w_{t-j}^{L_m}=w_{t-j}^{G}$ is publicly known, we have the following Lemma.

\begin{Lemma}\label{lemmanoiseA2}
Suppose that the loss function $f$ is L-Lipschitz, convex and $\mu$-smooth, let $D_{m}(t-j:t) \triangleq \{D_{m}(t-j),\cdots,D_{m}(t)\}, D'_{m}(t-j:t) \triangleq \{D'_{m}(t-j),\cdots,D'_{m}(t)\}$ be two neighboring datasets differing at only one sample located in the $i$-th mini-batch. For Algorithm \ref{algorithm2} with $\eta_{t}^{L_m} \leq \frac{1}{2\mu}, \forall t$, we have
\begin{equation}
  sup_{D_{m}(t-j:t) \sim D'_{m}(t-j:t)}||w_{t+1}-w'_{t+1}|| \leq \max_{k \in [t-j,t]}\frac{2\eta_{k}^{L_m}L}{b}.
\end{equation}
\end{Lemma}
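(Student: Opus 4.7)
The plan is to track the sensitivity of the local-iterate sequence $w^{L_m}_k$ as $k$ ranges from $t-j$ up to $t$, and then to handle the global update at step $t$ separately. Define $\delta_k := \sup_{D_m(t-j:t)\sim D'_m(t-j:t)}\|w^{L_m}_k - w'^{L_m}_k\|$. The base case is $\delta_{t-j}=0$: since the latest prior global update occurred at time $t-j-1$, the hypothesis $w^{L_m}_{t-j}=w^{G}_{t-j}$ is fixed by publicly available history and coincides in the two neighboring executions.

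For each local step $k \in [t-j, t-1]$ the update operator is $G_k(w) = w - 2\eta_k^{L_m}\nabla f_{D_m(k)}(w)$. The mini-batch loss $f_{D_m(k)} = \tfrac{1}{b}\sum_{i \in D_m(k)} f_i$ inherits convexity, $L$-Lipschitzness, and $\mu$-smoothness by averaging, and the hypothesis $\eta_k^{L_m} \leq 1/(2\mu)$ ensures $2\eta_k^{L_m}\leq 2/\mu$, so Lemma \ref{CONVEXANDSTRONGLY} shows $G_k$ is $1$-expansive. At each step $k$ where the two mini-batches coincide, the first case of Lemma \ref{GROWTH RECURSION} immediately gives $\delta_{k+1} \leq \delta_k$. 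At the unique step $k=i$ where the mini-batches differ, I would expand $w^{L_m}_{i+1}-w'^{L_m}_{i+1}$, add and subtract $2\eta_i^{L_m}\nabla f_{D_m(i)}(w'^{L_m}_i)$, apply $1$-expansiveness to the common-mini-batch portion, and bound the residual $2\eta_i^{L_m}\|\nabla f_{D_m(i)}(w'^{L_m}_i) - \nabla f_{D'_m(i)}(w'^{L_m}_i)\|$ by the $L_2$-sensitivity of a mini-batch gradient: swapping a single sample in $b$ alters the averaged gradient by at most $2L/b$. Propagating the recursion then yields $\delta_t \leq 4\eta_i^{L_m} L / b$ when $i \in [t-j, t-1]$, and $\delta_t = 0$ in the remaining case $i = t$.

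For the global update at step $t$, I would subtract the neighboring version of (\ref{GlobalUpdate}). Since $w^{G}_t$ is determined by public history (hence identical in both executions) and $N_t$ can be coupled to the same draw (it does not contribute to sensitivity), one obtains
\begin{equation*}
\|w^{G}_{t+1}-w'^{G}_{t+1}\| \leq \tfrac{1}{2}\delta_t + \eta_t^{L_m}\|\nabla f_{D_m(t)}(w^{G}_t) - \nabla f_{D'_m(t)}(w^{G}_t)\|.
\end{equation*}
When the differing sample lies in an earlier mini-batch ($i<t$), the gradient term vanishes and the $1/2$ averaging factor converts the local-phase bound $4\eta_i^{L_m}L/b$ into $2\eta_i^{L_m}L/b$; when $i=t$, the first term vanishes and the gradient-sensitivity bound gives $2\eta_t^{L_m}L/b$ directly. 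Taking the maximum over $i \in [t-j, t]$ and using $w^{L_m}_{t+1}=w^{G}_{t+1}$ yields the claim.

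The part I expect to be the main obstacle is recognizing that the averaging $\tfrac{1}{2}(w^{G}_t + w^{L_m}_t)$ built into (\ref{GlobalUpdate}) is precisely what keeps the final sensitivity at the clean level $2\eta_k^{L_m}L/b$ of Lemma \ref{lemmanoise}, rather than suffering the factor-of-two inflation caused by the doubled local step size $2\eta_k^{L_m}$. Because this averaging step lies outside the standard gradient-operator framework, Lemma \ref{GROWTH RECURSION} alone is not sufficient for the final transition; the last step has to be carried out by direct expansion, with care taken that the coupling argument for $N_t$ and for the publicly-known $w^{G}_t$ is explicit.
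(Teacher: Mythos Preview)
Your proposal is correct and follows essentially the same route as the paper: both split into the cases $i=t$ versus $i\in[t-j,t)$, use $1$-expansiveness of the local updates (from convexity, $\mu$-smoothness, and the step-size bound) together with a growth-recursion argument to obtain $\|w^{L_m}_{t}-w'^{L_m}_{t}\|\le 4\eta_i^{L_m}L/b$, and then exploit the $\tfrac12$ averaging in \eqref{GlobalUpdate} to halve this into the claimed $2\eta_k^{L_m}L/b$. Your add-and-subtract handling of the differing mini-batch and your explicit coupling of $N_t$ and the public $w^{G}_t$ are slightly more explicit than the paper's invocation of Lemma~\ref{GROWTH RECURSION}, but the substance is the same.
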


\begin{proof}
See Appendix \ref{proofLemma8}.
\end{proof}

\begin{theorem}\label{TheoremA2Privacy}
Suppose that the loss function $f$ is L-Lipschitz, convex and $\mu$-smooth, if \textcolor{black}{the noise term $\eta_{t}^{L_m}N_t$} is sampled according to (\ref{Noise}), with $\Delta_2(q)=||w_{t+1}-w'_{t+1}||$ which is given by Lemma \ref{lemmanoiseA2}, then Algorithm \ref{algorithm2} is $(\epsilon,\delta)$-differentially private.
\end{theorem}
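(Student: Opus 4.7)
The plan is to reduce Theorem \ref{TheoremA2Privacy} to a per-step Gaussian-mechanism argument combined with parallel composition across the without-replacement mini-batches. The new difficulty relative to Theorem \ref{TheoremA1Privacy} is that the global update rule (\ref{GlobalUpdate}) uses $w_{t}^{L_m}$, which may have absorbed information from several local updates of node $m$ performed since the previous time this node contacted the global model. Hence a changed training sample could in principle leak into the released $w_{t+1}^{G}$ either through a prior local iterate baked into $w_t^{L_m}$ or directly through $\nabla f_{D_m(t)}(w_t^{G})$; Lemma \ref{lemmanoiseA2} is precisely what controls both pathways simultaneously.

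First I would fix a node $m$ together with two neighboring global datasets $S \sim S'$ differing in a single sample. Under without-replacement sampling, this sample lives in exactly one mini-batch $D_m(i)$ used by node $m$, and that mini-batch is touched exactly once during the whole execution; every other node sees identical mini-batches under $S$ and $S'$. Let $t$ denote the first iteration at or after $i$ at which node $m$ performs a global update, and let $t-j-1$ be the previous global-update iteration of node $m$, so that $w_{t-j}^{L_m}=w_{t-j}^{G}$ was already publicly known before the swap. The only global update potentially affected by exchanging $S$ for $S'$ is therefore the one at step $t$, and Lemma \ref{lemmanoiseA2} bounds its pre-noise $L_2$ sensitivity by $\Delta_2(q) = \max_{k \in [t-j,t]} \frac{2\eta_k^{L_m} L}{b}$.

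Next, I would invoke Theorem \ref{AddingNoise}: because the effective noise term $\eta_t^{L_m} N_t$ added in (\ref{GlobalUpdate}) is Gaussian with standard deviation at least $\frac{c\,\Delta_2(q)}{\epsilon}$ with $c^2 \geq 2\ln(1.25/\delta)$, the released hypothesis $w_{t+1}^{G}$ at this single step is $(\epsilon,\delta)$-differentially private with respect to the altered sample. Because without-replacement sampling guarantees that this sample is never consulted again -- in no future local or global update of node $m$, and in no step of any other node -- its privacy budget is consumed only once. Parallel composition over disjoint mini-batches, exactly the argument sketched in the remark following Theorem \ref{TheoremA1Privacy}, then yields the $(\epsilon,\delta)$-DP guarantee of Algorithm \ref{algorithm2} over the entire training set.

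The main obstacle I anticipate is the bookkeeping that underlies Lemma \ref{lemmanoiseA2}: the sensitivity must remain uniform in where the altered sample sits inside the window $[t-j,t]$. I would want to verify that the chain of deterministic operators applied between successive global updates of node $m$ -- several local gradient steps with step sizes bounded by $\frac{1}{2\mu}$, followed by the averaging and gradient step in (\ref{GlobalUpdate}) -- is non-expansive in the sense of Lemma \ref{CONVEXANDSTRONGLY}, so that a perturbation introduced at iteration $k\in[t-j,t]$ is not amplified when propagated forward to step $t$. Once that propagation bound is cleanly in hand, the Gaussian-mechanism and parallel-composition portions of the proof are routine.
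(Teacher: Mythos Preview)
Your proposal is correct and follows essentially the same approach as the paper: invoke Lemma~\ref{lemmanoiseA2} together with Theorem~\ref{AddingNoise} to obtain $(\epsilon,\delta)$-differential privacy for each global update, and then use the without-replacement structure (each mini-batch visited once) to conclude via parallel composition that the whole algorithm is $(\epsilon,\delta)$-differentially private. The paper's own proof is a terse two-sentence version of exactly this argument, and the bookkeeping you flag as the ``main obstacle'' is precisely the content of Lemma~\ref{lemmanoiseA2}, whose proof in Appendix~\ref{proofLemma8} carries out the non-expansiveness propagation you describe using Lemmas~\ref{CONVEXANDSTRONGLY}--\ref{GROWTH RECURSION}.
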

\begin{proof}
See Appendix \ref{proofTheorem5}.
\end{proof}

The following theorem shows the convergence rate of Algorithm \ref{algorithm2} for convex loss function $f$.
\begin{theorem}\label{TheoremA2Convex}
  Suppose that the hypothesis space $\mathcal{W}$ has diameter $R$, the loss function $f$ is convex and $L$-Lipschitz on $\mathcal{W}$, and $||\nabla f_{i}(w)||^2 \leq B^2, \forall w,i$. Let $p_{t,L}^{L_m}$ and $p_{t,G}^{L_m}$ denote the probabilities (given by the Deep-Q learning algorithm) that node $m$ chooses to update the local model and global model, respectively. Then for any $1\le T \leq \frac{n}{b}$, if we run Algorithm \ref{algorithm2} for $T$ iterations with step size $\eta_{t}^{L_m}=\eta$, we have

\begin{equation}
\begin{split}
  &\mathbb{E}[\frac{1}{T}\sum_{t=1}^{T}F(p_{t,L}^{L_m}w^{L_m}_{t}+p_{t,G}^{L_m}w^{G}_{t})-F(w^{*})] \\
  &\leq p_{t,L}^{L_m}\mathbb{E}[\frac{1}{T}\sum_{t=1}^{T}F(w^{L_m}_{t})]+p_{t,G}^{L_m}\mathbb{E}[\frac{1}{T}\sum_{t=1}^{T}F(w^{G}_{t})]-F(w^{*}) \\
  &\leq \frac{(M+1)R^2}{4T\eta} + \eta B^2 + \frac{4\ln(1.25/\delta)\eta L^2}{b^2\epsilon^2}\frac{\sum_{m'}|p_{t,G}^{L_{m'}}|}{T}\\
  &+\frac{2(2+12\sqrt{2})RL}{3}\bigg[\frac{\sqrt{bT}}{n}+\frac{2}{\sqrt{n}+\sqrt{n-bT}}\bigg],
\end{split}
\end{equation}
\noindent in which $F(\cdot)=\frac{1}{n}\sum_{i=1}^{n}f(\cdot)$ is the empirical risk, and $\sum_{m'}|p_{t,G}^{L_{m'}}|$ is the expected total number of time instances that the nodes update the global model.
\end{theorem}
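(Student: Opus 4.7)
My plan is to reuse the framework from the proof of Theorem \ref{TheoremA1Convex} (which in turn builds on the without-replacement SGD analysis of \cite{shamir2016without}), extended to jointly track the two coupled iterates $w^{L_m}_t$ and $w^{G}_t$ whose updates are chosen by the Deep-Q policy. The first inequality in the statement is immediate: since $F(w)=\frac{1}{n}\sum_i f(w,(x_i,y_i))$ is convex as an average of convex losses and $p_{t,L}^{L_m}+p_{t,G}^{L_m}=1$, Jensen's inequality gives
\begin{equation*}
F(p_{t,L}^{L_m}w^{L_m}_t+p_{t,G}^{L_m}w^{G}_t)\le p_{t,L}^{L_m}F(w^{L_m}_t)+p_{t,G}^{L_m}F(w^{G}_t),
\end{equation*}
so averaging over $t$ and subtracting $F(w^*)$ yields the claimed decomposition. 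The remainder of the work is bounding each of the two resulting sums.

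For both terms I plan to set up a per-step squared-distance recursion of the form $\|w_{t+1}-w^*\|^2\le\|w_t-w^*\|^2-2\eta_t^{L_m}\bigl(F(w_t)-F(w^*)\bigr)+(\text{residuals})$, mirroring the derivation in Theorem \ref{TheoremA1Convex}. When the action at step $t$ is Local, the local model takes a noise-free gradient step of size $2\eta_t^{L_m}$, which together with Assumption \ref{Assumption1} and $L$-Lipschitzness produces the $\eta B^2$ term. When the action is Global, the update (\ref{GlobalUpdate}) is a single noisy gradient step of size $\eta_t^{L_m}$ starting from the midpoint $\tilde w_t\triangleq\tfrac12(w^G_t+w^{L_m}_t)$; convexity of $\|\cdot\|^2$ yields $\|\tilde w_t-w^*\|^2\le\tfrac12\|w^G_t-w^*\|^2+\tfrac12\|w^{L_m}_t-w^*\|^2$, which couples the two iterates but still allows the sum to telescope. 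The variance of the injected noise $\eta_t^{L_m}N_t$, calibrated by Theorem \ref{AddingNoise} using the $\Delta_2$ from Lemma \ref{lemmanoiseA2}, contributes $\frac{4\ln(1.25/\delta)\eta L^2}{b^2\epsilon^2}$ per noisy step, but only on iterations at which some node chooses Global.

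I would then sum the two recursions over $t=1,\ldots,T$ and aggregate across all $M$ local models plus the single global model, take expectations, and telescope. The $M+1$ initial distances, each bounded by $R^2$, collapse to the leading term $\frac{(M+1)R^2}{4T\eta}$, with the extra factor of $\tfrac12$ relative to Theorem \ref{TheoremA1Convex} coming from the doubled local step size and the midpoint averaging. The noise contributions aggregate to $\frac{4\ln(1.25/\delta)\eta L^2}{b^2\epsilon^2}\cdot\frac{\sum_{m'}|p_{t,G}^{L_{m'}}|}{T}$, since the expected number of noisy global updates across all nodes and $T$ iterations equals $\sum_{m'}|p_{t,G}^{L_{m'}}|$. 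The without-replacement sampling correction $\tfrac{2(2+12\sqrt2)RL}{3}\bigl[\tfrac{\sqrt{bT}}{n}+\tfrac{2}{\sqrt n+\sqrt{n-bT}}\bigr]$ transfers verbatim from the analysis of \cite{shamir2016without} already invoked in Theorem \ref{TheoremA1Convex}, because that argument only concerns how without-replacement sampling differs from i.i.d.\ sampling and is insensitive to additive noise.

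The main obstacle is precisely the coupling between $w^G_t$ and $w^{L_m}_t$ introduced by (\ref{GlobalUpdate}) and by the reset $w^{L_m}_{t+1}=w^{G}_{t+1}$ after a Global action: the single-iterate telescoping of Theorem \ref{TheoremA1Convex} does not directly apply. My resolution, as sketched above, is to treat all $M+1$ iterates as a joint system and use convexity of $\|\cdot\|^2$ on the midpoint $\tilde w_t$ to split it back into contributions from $w^G_t$ and $w^{L_m}_t$ before summing; this introduces the harmless constant shifts visible in the final bound but preserves the overall telescoping structure that makes the argument work.
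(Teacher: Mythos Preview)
Your proposal is correct and follows essentially the same route as the paper: the paper also introduces $\hat w_t=p_{t,L}^{L_m}w_t^{L_m}+p_{t,G}^{L_m}w_t^G$, uses convexity for the first inequality, tracks the joint quantity $\mathbb{E}\|w_t^{L_m}-w^*\|^2+\mathbb{E}\|w_t^G-w^*\|^2$ through a case split on the action, telescopes across the $M{+}1$ iterates, and imports the without-replacement correction verbatim from the proof of Theorem~\ref{TheoremA1Convex}. One small technical point to watch when you write it out: in the Global branch the gradient is evaluated at $w_t^G$, not at the midpoint $\tilde w_t$, so rather than bounding $\|\tilde w_t-w^*\|^2$ by convexity first, the paper groups the gradient with the $w_t^G$ piece via $\|a+b\|^2\le 2\|a\|^2+2\|b\|^2$ with $a=\tfrac12(w_t^G-w^*)-\eta\nabla f_{D_m(t)}(w_t^G)$ and $b=\tfrac12(w_t^{L_m}-w^*)$, which is what makes the convexity inequality $\langle w_t^G-w^*,\nabla f_{D_m(t)}(w_t^G)\rangle\ge f_{D_m(t)}(w_t^G)-f_{D_m(t)}(w^*)$ directly applicable.
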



\begin{proof}
\textcolor{black}{Please see Appendix \ref{proofTheorem6}.}
\end{proof}

\begin{Remark}
By properly selecting the step size $\eta$ (e.g., $\eta \propto \frac{1}{\sqrt{n}}$), the convergence rate is  $\mathbb{E}[\frac{1}{T}\sum_{t=1}^{T}F(p_{t,L}^{L_m}w^{L_m}_{t}+p_{t,G}^{L_m}w^{G}_{t})-F(w^{*})] \leq \mathcal{O}(\frac{1}{\sqrt{n}})$. In addition, according to the definition of $w^{*}$, $\mathbb{E}[\frac{1}{T}\sum_{t=1}^{T}F(w^{G}_{t})]-F(w^{*}) \geq 0$ and therefore $p_{t,L}^{L_m}[\mathbb{E}[\frac{1}{T}\sum_{t=1}^{T}F(w^{L_m}_{t})]-F(w^{*})] \leq \mathcal{O}(\frac{1}{\sqrt{n}})$. As a result, there exists a positive constant $p_{L}^{min} \leq p_{t,L}^{L_m}, \forall t,m$ such that $\mathbb{E}[\frac{1}{T}\sum_{t=1}^{T}F(w^{L_m}_{t})]-F(w^{*}) \leq \mathcal{O}(\frac{1}{p_{L}^{min}\sqrt{n}})$, which indicates the convergence of the local models.
\end{Remark}


For the convergence rate of Algorithm \ref{algorithm2} with $\lambda$-strongly convex loss function $f$, we add the following assumption.

\begin{Assumption}\label{A2StronglyAssumption}
At each time instance $0 \leq t \leq T$, each node updates once \textcolor{black}{(either the local model or the global model).}
\end{Assumption}

\begin{theorem}\label{TheoremA2Strongly}
Suppose that the loss function $f$ is $\gamma$-strongly convex and $L$-Lipschitz, and $||\nabla f_{i}(w)||^2 \leq B^2, \forall w,i$. For any $1\le T \leq \frac{n}{bM}$, if we run Algorithm \ref{algorithm2} for $T$ iterations with step size given by $\eta_{t}^{L_m} = \frac{1}{a\gamma t}, \forall m$, in which $a=\min\{p_{t,L}^{L_1},p_{t,G}^{L_1},\cdots,p_{t,L}^{L_M},p_{t,G}^{L_M}\}>0$, we have
\begin{equation}\label{Strongly}
\begin{split}
&\sum_{m=1}^{M}\mathbb{E}[||w^{L_m}_{t+1}-w^{*}||^2] + \mathbb{E}[||w^{G}_{t+1}-w^{*}||^2] \\
&\leq \mathcal{O}(\frac{MB^2}{a^2t}+\frac{MB^2\log t}{a^2bt}+\frac{ML^2\ln(\frac{1.25}{\delta})}{a^2b^2\epsilon^2t}).
\end{split}
\end{equation}
\end{theorem}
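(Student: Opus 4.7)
The plan is to track the aggregate Lyapunov potential $V_t = \sum_{m=1}^M \|w_t^{L_m}-w^*\|^2 + \|w_t^G-w^*\|^2$ and derive a stochastic recursion of the form $\mathbb{E}[V_{t+1}] \le (1-\tilde c/t)\mathbb{E}[V_t] + (\text{variance})/t^2$. This extends the single-sequence argument used for Algorithm~\ref{algorithm1} in Theorem~\ref{TheoremA1Strongly} (which leverages the Shamir~\cite{shamir2016without} without-replacement analysis) to the setting where at each step every node chooses between a local and a global update according to the Deep-Q policy.

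First I would, for each node $m$, condition on the action $a_t^m\in\{\mathrm{Local},\mathrm{Global}\}$ and bound the one-step distance. In the local branch, expanding $\|w_{t+1}^{L_m}-w^*\|^2$ after the step $w_{t+1}^{L_m} = w_t^{L_m} - 2\eta_t^{L_m}\nabla f_{D_m(t)}(w_t^{L_m})$, invoking $\gamma$-strong convexity and Assumption~\ref{Assumption1} yields a contraction of the form $(1-c_1\eta_t^{L_m}\gamma)\|w_t^{L_m}-w^*\|^2 + O((\eta_t^{L_m})^2 B^2)$. In the global branch, I would first apply convexity of $\|\cdot-w^*\|^2$ to the averaging step,
\begin{equation*}
\Bigl\|\tfrac{1}{2}(w_t^G+w_t^{L_m})-w^*\Bigr\|^2 \le \tfrac{1}{2}\|w_t^G-w^*\|^2 + \tfrac{1}{2}\|w_t^{L_m}-w^*\|^2,
\end{equation*}
and then handle the noisy gradient step exactly as in the proof of Theorem~\ref{TheoremA1Strongly}: the zero-mean Gaussian $N_t$, whose variance is controlled through Theorem~\ref{TheoremA2Privacy}, contributes an $O((\eta_t^{L_m})^2 L^2\ln(1.25/\delta)/(b^2\epsilon^2))$ term after taking expectations. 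Since $w_{t+1}^{L_m}=w_{t+1}^G$ whenever the global branch is chosen, both of the $V_{t+1}$-summands that involve node $m$ inherit the same bound in that case.

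Next I would take expectation over the Deep-Q policy, sum across all $M$ nodes (all of which update at every step by Assumption~\ref{A2StronglyAssumption}), and assemble a single recursion for $\mathbb{E}[V_{t+1}]$. The minimal mixing probability $a=\min_{m,c}p_{t,c}^{L_m}$ controls the worst-case Bernoulli weighting of the contractive terms, so with $\eta_t^{L_m}=1/(a\gamma t)$ the effective per-step contraction becomes $(1-\tilde c/t)$ and the recursion reads
\begin{equation*}
\mathbb{E}[V_{t+1}] \le \Bigl(1-\tfrac{\tilde c}{t}\Bigr)\mathbb{E}[V_t] + \frac{C_1 MB^2}{a^2\gamma^2 t^2} + \frac{C_2 ML^2\ln(1.25/\delta)}{a^2 b^2\epsilon^2\gamma^2 t^2} + \text{(WR-bias term)},
\end{equation*}
where the without-replacement bias, treated as in the proof of Theorem~\ref{TheoremA1Strongly}, contributes the remaining $MB^2\log t/(a^2 bt)$ summand (this is the only place an extra $\log t$ factor appears). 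Unrolling the recursion via the standard induction lemma for $\eta\propto 1/t$ strongly-convex SGD then yields the claimed $\mathcal{O}(1/t)$ rate with the three listed terms.

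The hard part will be the coupling introduced by the averaging $\tfrac{1}{2}(w_t^G+w_t^{L_m})$ together with the identification $w_{t+1}^{L_m}=w_{t+1}^G$ after a global update: these prevent any individual distance from evolving independently, so a single-variable recursion is insufficient and the aggregate potential $V_t$ is essential for producing a clean contraction. A secondary technical difficulty is that the without-replacement mini-batch gradients $\nabla f_{D_m(t)}$ are not unbiased estimators of the full empirical gradient, and the induced bias has to be absorbed by the same Shamir-style decomposition used in Theorem~\ref{TheoremA1Strongly}; the rescaling by $1/a$ in the step size is precisely what makes the resulting recursion tight enough to give a uniform $1/t$ rate across every admissible Deep-Q action distribution.
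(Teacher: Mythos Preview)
Your plan matches the paper's argument closely: the paper also tracks the aggregate potential $\sum_m\|w_t^{L_m}-w^*\|^2+\|w_t^G-w^*\|^2$, conditions on the Deep-Q action, uses the convexity splitting of the averaged point, applies strong convexity plus the AM--GM step from the Theorem~\ref{TheoremA1Strongly} proof to obtain a $(1-2a\eta_t\gamma)$ contraction after lower-bounding all action probabilities by $a$, and then closes with an induction using the same transductive-Rademacher bias bounds to pick up the $\log t/(bt)$ term. One detail you should anticipate: under Assumption~\ref{A2StronglyAssumption} all $M$ nodes update at the \emph{same} step $t$, so when node $m$ takes the global branch it may fetch a global model already modified by earlier nodes at that step; the paper handles this by introducing an intermediate state $w^G_{t+\frac12}$ and chaining the per-node inequalities (their (\ref{Temp1})--(\ref{Temp3})) before summing to the full $V_t$ recursion. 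Your aggregate-potential formulation absorbs this correctly, but you will need that intermediate bookkeeping to make the sum telescope cleanly.
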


\begin{proof}
\textcolor{black}{Please see Appendix \ref{proofTheorem7}.}
\end{proof}

\begin{Remark}
Note that the parameter $a$ may depend on the exploration rate $p_{explr}$ in the deep-Q learning algorithm, which is initialized to be large and then annealed down to a small constant (e.g., 0.1). \textcolor{black}{In particular, since there is a probability of $p_{explr}$ with which a node will randomly select an action, we have $\frac{p_{explr}}{2} \leq a \leq 1-\frac{p_{explr}}{2}$.}
\end{Remark}

\begin{figure*}[h]
\begin{minipage}[t]{0.24\linewidth}
\centering
\includegraphics[width=1\textwidth]{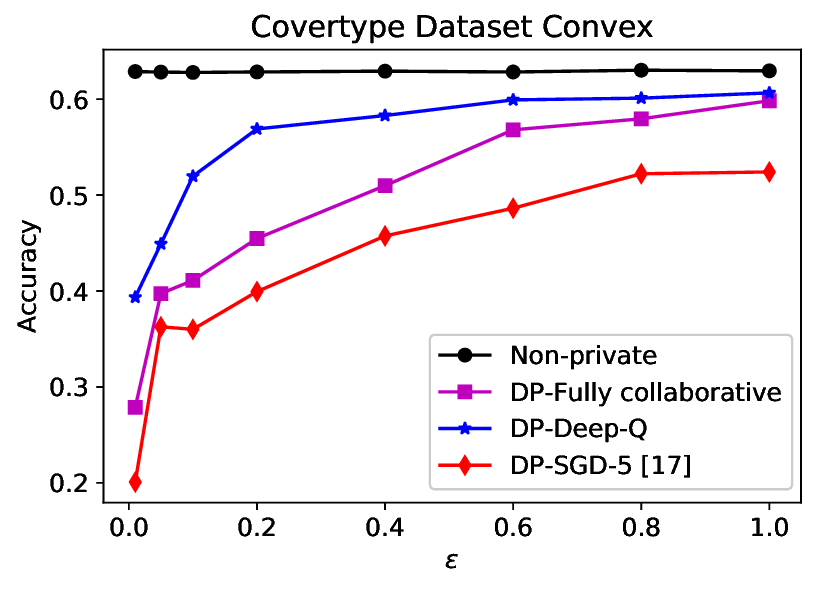}\vspace{-0.1in}
\caption{\footnotesize{Covertype Dataset Convex}}\vspace{-0.2in}
\label{Artificial Dataset Convex}
\end{minipage}
\begin{minipage}[t]{0.24\linewidth}
\centering
\includegraphics[width=1\textwidth]{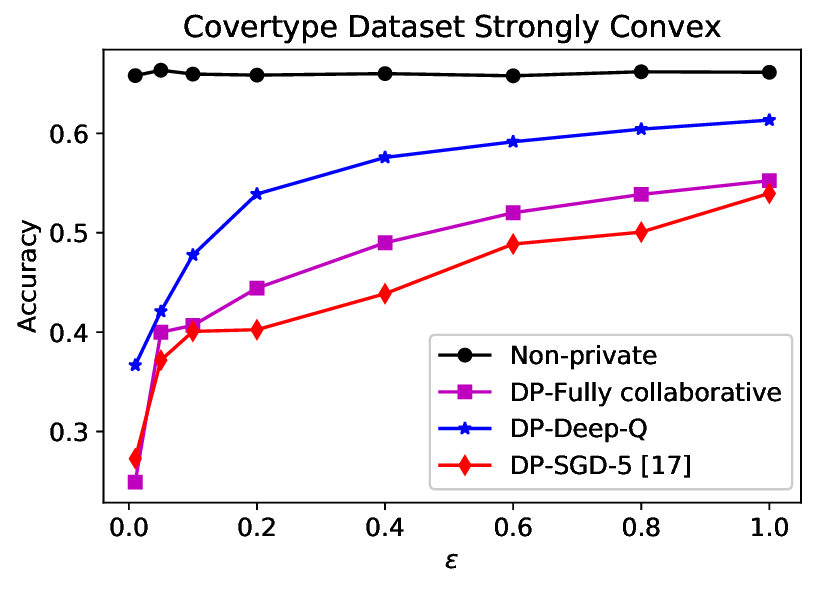}\vspace{-0.1in}
\caption{\footnotesize{Covertype Dataset Strongly Convex}}\vspace{-0.2in}
\label{Artificial Dataset Strongly Convex}
\end{minipage}
\begin{minipage}[t]{0.24\linewidth}
\centering
\includegraphics[width=1\textwidth]{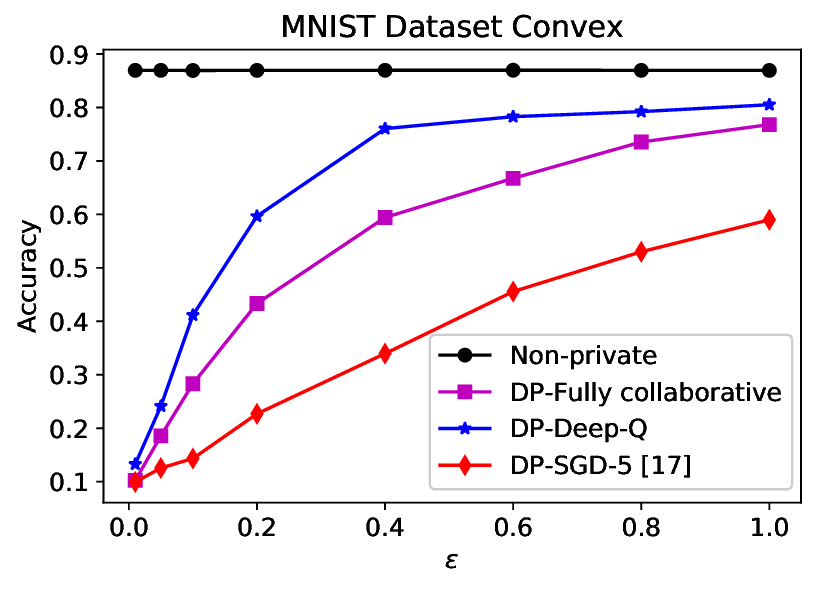}\vspace{-0.1in}
\caption{\footnotesize{MNIST Dataset Convex}}\vspace{-0.2in}
\label{MNISTConvex Dataset Convex}
\end{minipage}
\begin{minipage}[t]{0.24\linewidth}
\centering
\includegraphics[width=1\textwidth]{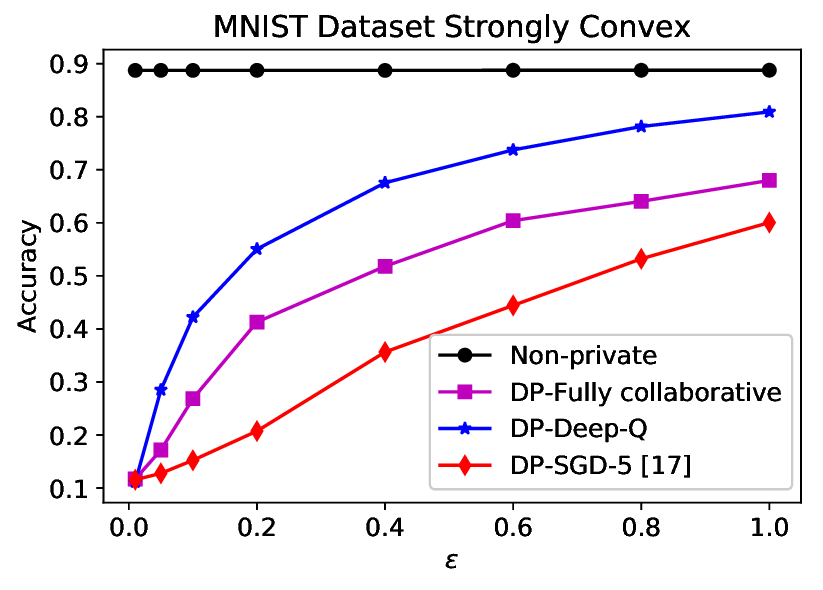}\vspace{-0.1in}
\caption{\footnotesize{MNIST Dataset Strongly Convex}}\vspace{-0.2in}
\label{MNISTConvex Dataset Strongly Convex}
\end{minipage}
\end{figure*}
\section{Simulation Results}\label{Numerical results}
\noindent This section presents simulation results to evaluate the effectiveness of the proposed algorithms. In particular, two \textcolor{black}{ widely used public datasets are considered: Covertype and MNIST.} MNIST is a computer vision dataset which consists of 70,000 $28\times28$ pixel images of handwritten digits from 0 to 9 \textcolor{black}{while Covertype is a larger dataset with 581,012 data points and a dimension of 54.} Without loss of generality, we reduce the data samples in MNIST to 50 dimensions with principal component analysis (PCA) \cite{wold1987principal} in our simulation. \textcolor{black}{In addition, the data of both datasets are normalized and projected on the surface of the unit ball before
training.} For the Deep-Q network, we build a 3-layer fully connected deep neural network for each node and choose the parameters according to \cite{mnih2015human}. The input layer consists of \textcolor{black}{$d+2$} neurons, where $d$ is the dimension of the training samples; the hidden layer consists of 128 neurons and the output layer consists of 2 neurons. The activation functions of all the three layers are linear and the weights are initialized by performing Xavier initialization in Tensorflow. The exploration rate $p_{explr}$ is set to 1 in the beginning and then annealed down to 0.1 within $\frac{n}{2Mb}$ steps. The Adam optimizer is used to train the Deep-Q neural network with a learning rate of $\gamma_{DQ}=0.01$; the mini-batch size and the size of the replay memory $D_m$ are set to 10 and 20, respectively. In the simulation, the MNIST dataset is divided into a training subset of 60,000 samples and a testing subset of 10,000 samples \textcolor{black}{while the Covertype dataset is divided into a training subset of 464,809 samples and a testing subset of 116,202 samples.} Each node randomly draws $\frac{n}{M}$ samples from the training subset as its local training dataset. We build ``one vs. all" multi-class logistic regression models for \textcolor{black}{both datasets}.\footnote{This means that 10 \textcolor{black}{(7)} binary models (one for each digit) are constructed and the output with the highest confidence is chosen as the prediction for the MNIST \textcolor{black}{(Covertype)} dataset.} Then the nodes run the proposed algorithms (one pass over their local training dataset) to train the models, followed by the testing.

\subsection{The Impact of Privacy Requirement}
\noindent In this subsection, we investigate the impact of privacy requirement on the accuracy of the proposed algorithm. It is assumed that there are 10 collaborative nodes with \textcolor{black}{60,000} training samples \textcolor{black}{for both datasets}. The privacy parameter $\delta$ is set to \textcolor{black}{$\frac{1}{n^2}$}. We set $\eta_t = 0.1$ for the convex case. For the strongly convex case, the regularization parameter and the diameter of weights $w$ is set to $\lambda=0.0001$ and $R=1/\lambda$, respectively. The mini-batch size is set to $b=50$.


Figure \ref{Artificial Dataset Convex} and Figure \ref{Artificial Dataset Strongly Convex} show the classification accuracy of the proposed algorithm for the \textcolor{black}{Covertype} dataset in the convex and strongly convex scenarios, respectively. More specifically, the simulation results of four scenarios are presented: the fully collaborative and noiseless case (denoted as ``Noiseless"); the differentially private and fully collaborative case (i.e., the nodes update the global model with probability 1, denoted as ``DP-Fully collaborative"); the differentially private and Deep-Q learning based algorithm (i.e., Algorithm \ref{algorithm2}, denoted as ``DP-Deep-Q"); \textcolor{black}{the baseline DP-SGD algorithm that adopts the i.i.d sampling strategy (denoted as ``DP-SGD-With Replacement"). For ``DP-SGD-With Replacement", we select a node to update a global model uniformly at random during each iteration. Each iteration is ensured to be $(\frac{\epsilon}{5},\frac{\delta}{5})$ differentially private and the nodes stop updating the global model once their privacy budgets are depleted (i.e., the training samples have been visited 5 times).  In addition, we use the same learning rate as that in \cite{hegedus2016distributed} and set $\eta_t=\frac{1}{\sqrt{t}}$}. \textcolor{black}{It can be observed that Algorithm \ref{algorithm2} outperforms both ``DP-Fully collaborative" and ``DP-SGD-With Replacement". While ``DP-Fully collaborative" gives higher accuracy than ``DP-SGD-With Replacement", another improvement of up to 10\% in accuracy can be achieved by using the Deep-Q learning based algorithm in both convex and strongly convex scenarios.} Similar results are observed on the MNIST dataset in Figure \ref{MNISTConvex Dataset Convex} and Figure \ref{MNISTConvex Dataset Strongly Convex}.

\begin{table}[!t]
\centering
\caption{\footnotesize{The Accuracy of Proposed Algorithms for Covertype Dataset}}
\label{tableArtificial}
\renewcommand{\arraystretch}{0.2}
\begin{tabular}{ | m{8em} | m{0.7cm}| m{0.7cm} | m{0.7cm} | m{0.7cm} |m{0.7cm}|}
\hline
Number of nodes & 1 & 3 & 5 & 10 & 20\\
\hline
Noiseless (convex)    & 56.24\% & 61.64\% & 61.90\% & 62.83\% &64.10\%\\
\hline
Fully Collaborative (convex)    & 54.21\% & 58.22\% & 58.63\% & 59.82\% &60.22\%\\
\hline
Algorithm \ref{algorithm2} (convex)    & - & 59.05\% & 59.22\% & 60.65\% &61.23\%\\
\hline
Noiseless (strongly convex)  & 62.79\% & 64.44\% & 65.31\% & 65.96\% &66.04\%\\
\hline
Fully Collaborative (strongly convex) & 50.60\% & 51.68\% & 52.60\% & 55.24\% &57.73\%\\
\hline
Algorithm \ref{algorithm2} (strongly convex) & - & 55.25\% & 59.27\% & 61.34\% &62.37\%\\
\hline
\end{tabular}
\end{table}
\vspace{-3mm}
\begin{table}[!t]
\centering
\caption{\footnotesize{The Accuracy of Proposed Algorithms for MNIST Dataset }}
\label{tableMnist}
\renewcommand{\arraystretch}{0.2}
\begin{tabular}{ | m{8em} | m{0.7cm}| m{0.7cm} | m{0.7cm} | m{0.7cm} |m{0.7cm}|}
\hline
Number of nodes & 1 & 3 & 5 & 10 &20\\
\hline
Noiseless (convex)    & 77.74\% & 84.07\% & 85.49\% & 86.83\% &87.69\%\\
\hline
Fully Collaborative (convex)    & 63.86\% & 71.48\% & 74.04\% & 76.80\% &78.17\%\\
\hline
Algorithm \ref{algorithm2} (convex)    & - & 77.17\% & 79.87\% & 80.52\% &81.93\%\\
\hline
Noiseless (strongly convex)  & 84.80\% & 88.13\% & 88.51\% & 88.76\% &88.96\%\\
\hline
Fully Collaborative (strongly convex) & 55.04\% & 63.46\% & 65.63\% & 68.00\% &73.9\%\\
\hline
Algorithm \ref{algorithm2} (strongly convex) & - & 71.03\% & 75.39\% & 80.93\% &82.97\%\\
\hline
\end{tabular}
\end{table}

\subsection{The Impact of the Number of Participating Nodes}

\noindent In this subsection, we investigate the impact of the number of participating nodes. In particular, it is assumed that each node has \textcolor{black}{60,000 training samples for both datasets.} Table \ref{tableArtificial} and Table \ref{tableMnist} show the accuracy of the proposed algorithms in different scenarios for the \textcolor{black}{Covertype} dataset and the MNIST dataset with $\epsilon=1$, respectively. It can be observed that as the number of participating nodes grows, the accuracy for both ``DP-Fully collaborative" and Algorithm \ref{algorithm2} increases since there are more training samples in total which can reduce the impact of the noise added at each iteration. In addition, Algorithm \ref{algorithm2} is always better than ``DP-Fully collaborative" and as the number of collaborative nodes grows, they are expected to approach the performance of the noiseless case. In the simulated scenarios, the accuracy degradation induced by privacy is within \textcolor{black}{6\% and 4\% for Algorithm \ref{algorithm2} when there are 20 collaborative nodes for the MNIST dataset and the Covertype dataset, respectively.} 

\section{Conclusions and Future Works}\label{Conclusions and Future Works}
\noindent In this work, the scenario in which multiple nodes (with limited training samples) collaboratively learn a global model is studied. A decentralized differentially private without-replacement SGD algorithm is proposed, and both privacy and convergence analysis are provided. Extensive simulations are conducted to demonstrate the effectiveness of the proposed algorithm. Since we only consider the cases in which the objective functions are convex, differentially private non-convex optimization problems remain our future work.

\appendices
\setlength{\abovedisplayskip}{1pt}
\setlength{\belowdisplayskip}{1pt}

\section{Proof of Lemma \ref{lemmanoiseA2}}\label{proofLemma8}
\begin{proof}
Since node $m$ updates the global model at time $t-j-1$, (\ref{GlobalUpdate}) can be written as follows:
\begin{equation}
\begin{split}
   w^{G}_{t+1} = &\frac{w^{G}_{t}+w^{L_{m}}_{t-j}-\sum_{k=1}^{j}2\eta_{t-k}^{L_m}\nabla f_{D_{m}(t-k)}(w^{L_{m}}_{t-k})}{2} \\
   &-\eta_{t}^{L_m}\nabla f_{D_{m}(t)}(w^{G}_{t}) + N_{t},
\end{split}
\end{equation}
\noindent in which $D_{m}(t-k)$ is empty, and therefore, $\nabla f_{D_{m}(t-k)}(w^{L_{m}}_{t-k})=0$ if node $m$ does not update its local model at time $t-k$ either. Since $D_{m}(t-j:t)$ and $D'_{m}(t-j:t)$ differs at only the $i-th$ mini-batch, there are two possible cases.

\textbf{case 1: ($i=t$)} In this case, we have
\begin{equation}\label{case1equation1}
\begin{split}
||w^{G}_{t+1}-w_{t+1}^{G'}||&=\eta_{t}^{L_m}||\nabla f_{D_{m}(t)}(w^{G}_{t}) - \nabla f_{D_{m}^{‘}(t)}(w^{G}_{t})|| \\
&\leq \frac{2\eta_{t}^{L_m}L}{b},
\end{split}
\end{equation}

\textbf{case 2: ($i \in [t-j,t)$)} In this case,
\begin{equation}
\begin{split}
||w^{G}_{t+1}-w_{t+1}^{G'}|| = \frac{1}{2}||w_{t}^{L_{m}}-w_{t}^{L'_{m}}||,
\end{split}
\end{equation}

\noindent in which $w_{t}^{L_{m}}$ and $w_{t}^{L'_{m}}$ are the local models of node $m$ after $j$ updates using the local mini-batches $D_{m}(t-j:t-1)$ and $D'_{m}(t-j:t-1)$, respectively. According to Lemma \ref{CONVEXANDSTRONGLY}-\ref{GROWTH RECURSION}, when $f_{i}$'s are convex, we have
\begin{equation}
\begin{aligned}
&||w_{k}^{L_{m}}-w_{k}^{L'_{m}}|| \leq \\ \nonumber
&\begin{cases}
||w_{k-1}^{L_{m}}-w_{k-1}^{L'_{m}}||, &\text{if} ~D_{m}(k-1)=D_{m}^{'}(k-1).\\
||w_{k-1}^{L_{m}}-w_{k-1}^{L'_{m}}|| + \frac{4\eta_{k-1}^{L_m}L}{b}, &\text{if} ~D_{m}(k-1)\neq D'_{m}(k-1).
\end{cases}
\end{aligned}
\end{equation}
As a result,
\begin{equation}\label{case2equation2}
\begin{split}
\frac{1}{2}||w_{t}^{L_{m}}-w_{t}^{L'_{m}}|| \leq \max_{k \in [t-j,t)}\frac{2\eta_{k}^{L_m}L}{b}.
\end{split}
\end{equation}
Combining (\ref{case1equation1}) and (\ref{case2equation2}), we have
\begin{equation}\label{noiseA2}
||w^{G}_{t+1}-w_{t+1}^{G'}|| \leq \max_{k \in [t-j,t]}\frac{2\eta_{k}^{L_m}L}{b}.
\end{equation}
\end{proof}

\section{Proof of Theorem \ref{TheoremA2Privacy}}\label{proofTheorem5}
\begin{proof}
Combing Lemma \ref{lemmanoiseA2} and Theorem \ref{AddingNoise}, it follows that each update step in Algorithm \ref{algorithm2} is $(\epsilon,\delta)$-differentially private. Since each mini-batch is only visited once, Algorithm \ref{algorithm2} is also $(\epsilon,\delta)$-differentially private over the whole dataset.
\end{proof}

\section{Proof of Theorem \ref{TheoremA2Convex}}\label{proofTheorem6}
\begin{proof}
Suppose that node $m$ obtains a mini-batch of training sample and decides to update either its local model or the global model at time $t$. Let $\hat{w}_{t}=p_{t,L}^{L_m}w^{L_m}_{t}+p_{t,G}^{L_m}w^{G}_{t}$, $f_{D_{m}(t)}(\hat{w}_{t})=\frac{1}{b}\sum_{i\in D_{m}(t)}f_i(\hat{w}_{t})$ we have
\begin{equation}\label{A2ConvexE6}
 \begin{split}
    &\mathbb{E}[\frac{1}{T}\sum_{t=1}^{T}F(\hat{w}_{t})-F(w^{*})] \\
      &=\mathbb{E}[\frac{1}{T}\sum_{t=1}^{T}(F(\hat{w}_{t})-f_{D_{m}(t)}(\hat{w}_{t}))]\\
      &+\mathbb{E}[\frac{1}{T}\sum_{t=1}^{T}f_{D_{m}(t)}(\hat{w}_{t})-F(w^{*})] \\
      &=\mathbb{E}[\frac{1}{T}\sum_{t=1}^{T}(F(\hat{w}_{t})-f_{D_{m}(t)}(\hat{w}_{t}))]\\
      &+\mathbb{E}\big[\frac{1}{T}\sum_{t=1}^{T}[f_{D_{m}(t)}(\hat{w}_{t})-f_{D_{m}(t)}(w^{*})]\big].
 \end{split}
\end{equation}
We bound the second term first. According to the update rule, we have
\begin{equation}\label{A2ConvexE5}
\begin{split}
&\mathbb{E}[||w^{G}_{t+1}-w^{*}||^2]+\mathbb{E}[||w^{L_m}_{t+1}-w^{*}||^2]\\
&=p_{t,L}^{L_m}\bigg[\mathbb{E}[||w^{L_m}_{t}-2\eta_{t}^{L_m}\nabla f_{D_{m}(t)}(w_{t}^{L_m})-w^{*}||^2] \\
&+\mathbb{E}[||w^{G}_{t}-w^{*}||^2]\bigg] \\
&+2p_{t,G}^{L_m}\mathbb{E}[||\frac{w^{G}_{t}+w^{L_{m}}_{t}}{2} - \eta_{t}^{L_m}\nabla f_{D_{m}(t)}(w^{G}_{t})-w^{*} + N_{t}||^2]
\end{split}
\end{equation}

In particular, the first term of (\ref{A2ConvexE5}) admits
\begin{equation}\label{A2ConvexE1}
\begin{split}
&\mathbb{E}[||w^{L_m}_{t}-2\eta_{t}^{L_m}\nabla f_{D_{m}(t)}(w_{t}^{L_m})-w^{*}||^2 \\
&=\mathbb{E}[||w^{L_m}_{t} - w^{*}||^2] + 4(\eta_{t}^{L_m})^{2}\mathbb{E}[||\nabla f_{D_{m}(t)}(w_{t}^{L_m})||^2] \\
&-4\eta_{t}^{L_m}\mathbb{E}[<w^{L_m}_{t} - w^{*},\nabla f_{D_{m}(t)}(w_{t}^{L_m})>].
\end{split}
\end{equation}

\noindent and the second term of (\ref{A2ConvexE5}) admits

\begin{equation}\label{A2ConvexE2}
\begin{split}
&\mathbb{E}[||\frac{w^{G}_{t}+w^{L_{m}}_{t}}{2} - \eta_{t}^{L_m}\nabla f_{D_{m}(t)}(w^{G}_{t})-w^{*} + N_{t}||^2] \\
&=\mathbb{E}[||\frac{w^{G}_{t}+w^{L_{m}}_{t}}{2} - \eta_{t}^{L_m}\nabla f_{D_{m}(t)}(w^{G}_{t})-w^{*}||^2] + \mathbb{E}[||N_{t}||^2] \\
&\leq 2\mathbb{E}[||\frac{w^{G}_{t}-w^{*}}{2}- \eta_{t}^{L_m}\nabla f_{D_{m}(t)}(w^{G}_{t})||^2] \\
&+ 2\mathbb{E}[||\frac{w^{L_m}_{t}-w^{*}}{2}||^2] + \mathbb{E}[||N_{t}||^2] \\
&\leq 2\mathbb{E}[||\frac{w^{G}_{t}-w^{*}}{2}||^2]+2(\eta_{t}^{L_m})^{2}\mathbb{E}[||\nabla f_{D_{m}(t)}(w^{G}_{t})||^2] \\
&+2\mathbb{E}[||\frac{w^{L_m}_{t}-w^{*}}{2}||^2] + \mathbb{E}[||N_{t}||^2]\\
&-4\eta_{t}^{L_m}\mathbb{E}[<\frac{w^{G}_{t}-w^{*}}{2},\nabla f_{D_{m}(t)}(w^{G}_{t})>].
\end{split}
\end{equation}
\noindent in which the first equality is due to the fact that $N_t$ is zero-mean Gaussian noise.

Due to convexity, we have
\begin{equation}\label{A2ConvexE3}
\begin{split}
&<w^{L_m}_{t}-w^{*},\nabla f_{D_{m}(t)}(w^{L_{m}}_{t})> \geq f_{D_{m}(t)}(w^{L_m}_{t}) - f_{D_{m}(t)}(w^{*})
\end{split}
\end{equation}
\begin{equation}\label{A2ConvexE4}
\begin{split}
&<w^{G}_{t}-w^{*},\nabla f_{D_{m}(t)}(w^{G}_{t})> \geq f_{D_{m}(t)}(w^{G}_{t}) - f_{D_{m}(t)}(w^{*})
\end{split}
\end{equation}
Plugging (\ref{A2ConvexE1}), (\ref{A2ConvexE2}), (\ref{A2ConvexE3}) and (\ref{A2ConvexE4}) into (\ref{A2ConvexE5}) yields

\begin{equation}
\begin{split}
&\mathbb{E}[f_{D_{m}(t)}(p_{t,L}^{L_m}w^{L_m}_{t}+p_{t,G}^{L_m}w^{G}_{t})] - f_{D_{m}(t)}(^{*})\\
&\leq p_{t,L}^{L_m}\mathbb{E}[f_{D_{m}(t)}(w^{L_m}_{t})- f_{D_{m}(t)}(w^{*})] \\
&+p_{t,G}^{L_m}\mathbb{E}[f_{D_{m}(t)}(w^{G}_{t})- f_{D_{m}(t)}(w^{*})] \\
&\leq \frac{1}{4\eta_{t}^{L_m}}\bigg[\mathbb{E}[||w^{L_m}_{t}-w^{*}||^2]+\mathbb{E}[||w^{G}_{t}-w^{*}||^2] \\
&- \mathbb{E}[||w^{L_m}_{t+1}-w^{*}||^2] - \mathbb{E}[||w^{G}_{t+1}-w^{*}||^2] \bigg]\\
&+\eta_{t}^{L_m}B^2+\frac{1}{2\eta_{t}^{L_m}}p_{t,G}^{L_m}\mathbb{E}[||N_{t}||^2].
\end{split}
\end{equation}

Let $\eta_1^{L_m}=\eta_2^{L_m}=\cdots=\eta_{T}^{L_m}=\eta$. We have
\begin{equation}
\mathbb{E}[||N_{t}||^2] = \frac{8\ln(1.25/\delta)\eta^2L^2}{b^2\epsilon^2}
\end{equation}

Averaging both sides over $t=1,\cdots,T$, we have
\begin{equation}
\begin{split}
&\mathbb{E}[\frac{1}{T}\sum_{t=1}^{T}[f_{D_{m}(t)}(\hat{w}_{t})-f_{D_{m}(t)}(w^{*})]]\\
&\leq \frac{(M+1)R^2}{4T\eta} + \eta B^2 + \frac{4\ln(1.25/\delta)\eta L^2}{b^2\epsilon^2}\frac{\sum_{m'}|p_{t,G}^{L_{m'}}|}{T},
\end{split}
\end{equation}
\noindent in which $\sum_{m'}|p_{t,G}^{L_{m'}}|$ is the expected total number of time instances that the nodes update the global model.

Then we try to bound the first term, since $f_{i}$ is $L$-Lipschitz, we have $sup_{\boldsymbol{w} \in \mathcal{W}}|| f_i(\boldsymbol{w})||\leq LR$. According to Lemma \ref{lemma1}, Lemma \ref{lemmameng1} and Lemma \ref{lemmameng2} (see Appendix \ref{TRC})
\begin{equation}\label{bound}
\begin{split}
&\mathbb{E}[\frac{1}{T}\sum_{t=1}^{T}(F(\hat{w}_{t})-f_{D_{m}(t)}(\hat{w}_{t}))] \\ &=\frac{1}{T}\sum_{t=1}^{T}\frac{(t-1)b}{n}\mathbb{E}[F_{1:(t-1)b}-F_{(t-1)b+1:n}] \\
&\leq \frac{(2+12\sqrt{2})LR}{T}\sum_{t=2}^{T}\frac{b(t-1)}{n}(\frac{1}{\sqrt{(t-1)b}}+\frac{1}{\sqrt{(n-(t-1)b)}})\\
&\leq \frac{(2+12\sqrt{2})bLR}{Tn}\int_{t=0}^{T}(\sqrt{\frac{t}{b}}+\frac{t}{\sqrt{n-tb}})\\
&=\frac{(2+12\sqrt{2})bLR}{Tn}\times \\
&\bigg[\sqrt{\frac{1}{b}}\frac{2}{3}T^{\frac{3}{2}}+\frac{2}{3b^2}[2n\sqrt{n}-\sqrt{n-bT}(2n+bT)]\bigg]\\
&=\frac{2(2+12\sqrt{2})LR}{3}\bigg[\frac{\sqrt{bT}}{n}+\frac{2}{T}(\frac{\sqrt{n}}{b}-\sqrt{n-bT}(\frac{1}{b}+\frac{T}{2n}))\bigg]\\
&\leq \frac{2(2+12\sqrt{2})LR}{3}\bigg[\frac{\sqrt{bT}}{n}+\frac{2}{T}(\frac{\sqrt{n}}{b}-\sqrt{n-bT}\frac{1}{b})\bigg]\\
&= \frac{2(2+12\sqrt{2})LR}{3}\bigg[\frac{\sqrt{bT}}{n}+\frac{2}{bT}(\frac{bT}{\sqrt{n}+\sqrt{n-bT}})\bigg]\\
&= \frac{2(2+12\sqrt{2})LR}{3}\bigg[\frac{\sqrt{bT}}{n}+\frac{2}{\sqrt{n}+\sqrt{n-bT}}\bigg]\\
\end{split}
\end{equation}

As a result, we have
\begin{equation}
\begin{split}
 &\mathbb{E}[\frac{1}{T}\sum_{t=1}^{T}F(p_{t,L}^{L_m}w^{L_m}_{t}+p_{t,G}^{L_m}w^{G}_{t})-F(w^{*})] \\
 &\leq p_{t,L}^{L_m}\mathbb{E}[\frac{1}{T}\sum_{t=1}^{T}F(w^{L_m}_{t})]+p_{t,G}^{L_m}\mathbb{E}[\frac{1}{T}\sum_{t=1}^{T}F(w^{G}_{t})]-F(w^{*})\\
 &\leq \frac{M+1}{4T\eta} + \eta B^2 + \frac{4\ln(1.25/\delta)\eta L^2}{b^2\epsilon^2}\frac{\sum_{m}|p_{t,G}^{L_m}|}{T}\\
 &+\frac{2(2+12\sqrt{2})RL}{3}\bigg[\frac{\sqrt{bT}}{n}+\frac{2}{\sqrt{n}+\sqrt{n-bT}}\bigg].
\end{split}
\end{equation}
\end{proof}

\section{Proof of Theorem \ref{TheoremA2Strongly}}\label{proofTheorem7}

\begin{proof}
Note that in this case, the global model that node $m$ uses to update may not be $w^G_{t}$ since it may already be updated by the other nodes. Therefore, let $w^G_{t+\frac{1}{2}}$ denote the global model which is utilized by node $m$ at time $t$, we have
\begin{equation}\label{4444}
\begin{split}
&\mathbb{E}[||w^{L_m}_{t+1}-w^{*}||^2] + \mathbb{E}[||w^{G}_{t+1}-w^{*}||^2] \\
&=2p_{t,G}^{L_m}\mathbb{E}\bigg[||\frac{w^{G}_{t+\frac{1}{2}}+w^{L_m}_{t}}{2}-\eta_{t}^{L_m}\nabla f_{D_{m}(t)}(w^{G}_{t+\frac{1}{2}})-w^{*} \\
&+N_{t}^{m}||^2\bigg]+p_{t,L}^{L_m}\mathbb{E}[||w^{G}_{t+\frac{1}{2}}-w^{*}||^2]\\
&+p_{t,L}^{L_m}\mathbb{E}[||w^{L_m}_{t}-2\eta_{t}^{L_m}\nabla f_{D_{m}(t)}(w^{L_m}_{t})-w^{*}||^2].
\end{split}
\end{equation}

In particular, we have
\begin{equation}\label{1111}
\begin{split}
&\mathbb{E}[||w^{L_m}_{t}-2\eta_{t}^{L_m}\nabla f_{D_{m}(t)}(w^{L_m}_{t})-w^{*}||^2]\\
&\leq \mathbb{E}[||w^{L_m}_{t}-w^{*}||^2] + (2\eta_{t}^{L_m})^2\mathbb{E}[||\nabla f_{D_{m}(t)}(w^{L_m}_{t})||^2]\\
&-4\eta_{t}^{L_m}\mathbb{E}[<w^{L_m}_{t}-w^{*},\nabla f_{D_{m}(t)}(w^{L_m}_{t})>]\\
&\leq \mathbb{E}[||w^{L_m}_{t}-w^{*}||^2] + 8(\eta_{t}^{L_m})^2\mathbb{E}[||F(w^{L_m}_{t})||^2]\\
&+8(\eta_{t}^{L_m})^2\mathbb{E}[||F(w^{L_m}_{t})-\nabla f_{D_{m}(t)}(w^{L_m}_{t})||^2]\\
&-4\eta_{t}^{L_m}\mathbb{E}[<w^{L_m}_{t}-w^{*},\nabla f_{D_{m}(t)}(w^{L_m}_{t})>],
\end{split}
\end{equation}

\begin{equation}\label{2222}
\begin{split}
&\mathbb{E}\bigg[||\frac{w^{G}_{t+\frac{1}{2}}+w^{L_m}_{t}}{2}-\eta_{t}^{L_m}\nabla f_{D_{m}(t)}(w^{G}_{t+\frac{1}{2}})-w^{*}-N_{t}^{m}||^2\bigg]\\
&\leq 2\mathbb{E}[||\frac{w^{G}_{t+\frac{1}{2}}-w^{*}}{2}||^2] + 2(\eta_{t}^{L_m})^2\mathbb{E}[||\nabla f_{D_{m}(t)}(w^{G}_{t+\frac{1}{2}})||^2]\\
&-4\eta_{t}^{L_m}\mathbb{E}[<\frac{w^{G}_{t+\frac{1}{2}}-w^{*}}{2},\nabla f_{D_{m}(t)}(w^{G}_{t+\frac{1}{2}})>] \\
&+ 2\mathbb{E}[||\frac{w^{L_m}_{t}-w^{*}}{2}||^2] + \mathbb{E}[||N_{t}^{m}||^2] \\
&\leq 2\mathbb{E}[||\frac{w^{G}_{t+\frac{1}{2}}-w^{*}}{2}||^2] + 4(\eta_{t}^{L_m})^2\mathbb{E}[||F(w^{G}_{t+\frac{1}{2}})||^2]\\
&+4(\eta_{t}^{L_m})^2\mathbb{E}[||F(w^{G}_{t+\frac{1}{2}})-\nabla f_{D_{m}(t)}(w^{G}_{t+\frac{1}{2}})||^2]\\
&-4\eta_{t}^{L_m}\mathbb{E}[<\frac{w^{G}_{t+\frac{1}{2}}-w^{*}}{2},\nabla f_{D_{m}(t)}(w^{G}_{t+\frac{1}{2}})>] \\
&+ 2\mathbb{E}[||\frac{w^{L_m}_{t}-w^{*}}{2}||^2] + \mathbb{E}[||N_{t}^{m}||^2].
\end{split}
\end{equation}

In addition, according to the strong convexity, we have
\begin{equation}\label{6666}
\begin{split}
&-\mathbb{E}[<w^{L_m}_{t}-w^{*},\nabla f_{D_{m}(t)}(w^{L_m}_{t})>] \\
&=-\mathbb{E}[<w^{L_m}_{t}-w^{*},\nabla F(w^{L_m}_{t})>] \\
&+\mathbb{E}[<w^{L_m}_{t}-w^{*},\nabla F(w^{L_m}_{t}) - \nabla f_{D_{m}(t)}(w^{L_m}_{t})>] \\
&\leq -\gamma\mathbb{E}[||w^{L_m}_{t}-w^{*}||^2]+\frac{\gamma}{2}\mathbb{E}[||w^{L_m}_{t}-w^{*}||^2] \\
&+\frac{1}{2\gamma}\mathbb{E}[||\nabla F(w^{L_m}_{t}) - \nabla f_{D_{m}(t)}(w^{L_m}_{t})||^2] \\
&=\frac{1}{2\gamma}\mathbb{E}[||\nabla F(w^{L_m}_{t}) - \nabla f_{D_{m}(t)}(w^{L_m}_{t})||^2] \\
&-\frac{\gamma}{2}\mathbb{E}[||w^{L_m}_{t}-w^{*}||^2],
\end{split}
\end{equation}

and similarly,

\begin{equation}\label{7777}
\begin{split}
&-\mathbb{E}[<w^{G}_{t+\frac{1}{2}}-w^{*},\nabla f_{D_{m}(t)}(w^{G}_{t+\frac{1}{2}})>] \\
&\leq \frac{1}{2\gamma}\mathbb{E}[||\nabla F(w^{G}_{t+\frac{1}{2}})-\nabla f_{D_{m}(t)}(w^{G}_{t+\frac{1}{2}})||^2]\\
&-\frac{\gamma}{2}\mathbb{E}[||w^{G}_{t+\frac{1}{2}}-w^{*}||^2].
\end{split}
\end{equation}

Plugging (\ref{1111}),(\ref{2222}),(\ref{6666}) and (\ref{7777}) into (\ref{4444}) gives

\begin{equation}\label{Temp1}
\begin{split}
&\mathbb{E}[||w^{L_m}_{t+1}-w^{*}||^2] + \mathbb{E}[||w^{G}_{t+1}-w^{*}||^2] \\
&\leq (1-2p_{t,L}^{L_m}\eta_{t}^{L_m}\gamma)\mathbb{E}[||w^{L_m}_{t}-w^{*}||^2] \\
&+(1-2p_{t,L}^{G}\eta_{t}^{L_m}\gamma)\mathbb{E}[||w^{G}_{t+\frac{1}{2}}-w^{*}||^2]\\
&+ [8(\eta_{t}^{L_m})^2+\frac{2\eta_{t}^{L_m}}{\gamma}]\times\\
&\frac{2b^2(2+12\sqrt{2})^2B^2}{n_{m}^2}(\frac{t-1}{b}+\frac{(t-1)^2}{n_{m}-(t-1)b})\\
&+8(\eta_{t}^{L_m})^2G^2 + 2p_{t,G}^{L_{m}}\mathbb{E}[||N_{t}^{m}||^2],
\end{split}
\end{equation}
\noindent in which $n_{m}=\frac{n}{M}$ is the total number of training samples that node $m$ has. Let $a=\min\{p_{t,L}^{L_1},p_{t,G}^{L_1},\cdots,p_{t,L}^{L_M},p_{t,G}^{L_M}\}$, we have
\vspace{2mm}
\begin{equation}
\begin{split}
&(1-2p_{t,L}^{L_m}\eta_{t}^{L_m}\gamma)\mathbb{E}[||w^{L_m}_{t}-w^{*}||^2]\\
&+(1-2p_{t,G}^{L_m}\eta_{t}^{L_m}\gamma)\mathbb{E}[||w^{G}_{t+\frac{1}{2}}-w^{*}||^2]\\
&\leq (1-2a\eta_{t}^{L_m}\gamma)\big[\mathbb{E}[||w^{L_m}_{t}-w^{*}||^2]+\mathbb{E}[||w^{G}_{t+\frac{1}{2}}-w^{*}||^2]\big].
\end{split}
\end{equation}

Let $v_{m}(t) = [8(\eta_{t}^{L_m})^2+\frac{2\eta_{t}^{L_m}}{\gamma}]\times\frac{2b^2(2+12\sqrt{2})^2B^2}{n_{m}^2}(\frac{t-1}{b}+\frac{(t-1)^2}{n_{m}-(t-1)b})+8(\eta_{t}^{L_m})^2G^2 + 2p_{t,G}^{L_{m}}\mathbb{E}[||N_{t}^{m}||^2]$ and assume that at time $t$, the global model has already been updated by another node $m'$, according to (\ref{Temp1}) we have
\vspace{2mm}
\begin{equation}\label{Temp2}
\begin{split}
&\mathbb{E}[||w^{L_{m'}}_{t+1}-w^{*}||^2] + \mathbb{E}[||w^{G}_{t+\frac{1}{2}}-w^{*}||^2] \leq v_{m'}(t) \\
&+(1-2a\eta_{t}^{L_{m'}}\gamma)\big[\mathbb{E}[||w^{L_{m'}}_{t}-w^{*}||^2]+\mathbb{E}[||w^{G}_{t}-w^{*}||^2]\big].
\end{split}
\end{equation}

Combing (\ref{Temp1}) and (\ref{Temp2}), we have
\begin{equation}\label{Temp3}
\begin{split}
&\mathbb{E}[||w^{L_m}_{t+1}-w^{*}||^2] + \mathbb{E}[||w^{L_{m'}}_{t+1}-w^{*}||^2] + \mathbb{E}[||w^{G}_{t+1}-w^{*}||^2]\\
&\leq (1-2a\eta_{t}^{L_m}\gamma)\big[\mathbb{E}[||w^{L_m}_{t}-w^{*}||^2]+\mathbb{E}[||w^{G}_{t+\frac{1}{2}}-w^{*}||^2]\big] \\
&+v_{m}(t) + \mathbb{E}[||w^{L_{m'}}_{t+1}-w^{*}||^2] \\
&\leq (1-2a\eta_{t}^{L_m}\gamma)\mathbb{E}[||w^{L_m}_{t}-w^{*}||^2] + v_{m}(t)+v_{m'}(t) \\
&+ (1-2a\eta_{t}^{L_{m'}}\gamma)\big[\mathbb{E}[||w^{L_{m'}}_{t}-w^{*}||^2]+\mathbb{E}[||w^{G}_{t}-w^{*}||^2]\big].
\end{split}
\end{equation}

By taking $\eta_{t}^{L_m} = \frac{1}{a\gamma (t-1)}, \forall m$, and extending (\ref{Temp3}) to the $M$ nodes case, we have
\begin{equation}\label{Temp4}
\begin{split}
&\sum_{m=1}^{M}\mathbb{E}[||w^{L_m}_{t+1}-w^{*}||^2] + \mathbb{E}[||w^{G}_{t+1}-w^{*}||^2] \leq \sum_{m=1}^{M}v_{m}(t) \\
&+(1-\frac{2}{t})\big[\sum_{m=1}^{M}\mathbb{E}[||w^{L_m}_{t}-w^{*}||^2]+\mathbb{E}[||w^{G}_{t}-w^{*}||^2]\big].
\end{split}
\end{equation}

In addition, when $\eta_{t}^{L_m} = \frac{1}{a\gamma (t-1)}, \forall m$,

\begin{equation}
\begin{split}
&[8(\eta_{t}^{L_m})^2+\frac{2\eta_{t}^{L_m}}{\gamma}]\times\\
&\frac{2b^2(2+12\sqrt{2})^2B^2}{n_{m}^2}(\frac{t-1}{b}+\frac{(t-1)^2}{n_{m}-(t-1)b})\\
&\leq 2(2+12\sqrt{2})^2B^2[\frac{8}{a^2\gamma^2 t}+\frac{2}{a\gamma^2}](\frac{1}{n_{m}[T-(t-1)]}),
\end{split}
\end{equation}

Since each node updates once at every time instance, $T = \frac{n}{bM}=\frac{n_m}{b}$. Therefore,
\begin{equation}
\begin{split}
&\sum_{m=1}^{M}\mathbb{E}[||w^{L_m}_{t+1}-w^{*}||^2] + \mathbb{E}[||w^{G}_{t+1}-w^{*}||^2] \\
&\leq (1-\frac{2}{t})\big[\sum_{m=1}^{M}\mathbb{E}[||w^{L_m}_{t}-w^{*}||^2]+\mathbb{E}[||w^{G}_{t}-w^{*}||^2]\big]\\
&+2M(2+12\sqrt{2})^2B^2[\frac{8}{a^2\gamma^2 t}+\frac{2}{a\gamma^2}](\frac{1}{n_{m}[T-(t-1)]})\\
&+\frac{8M}{a^2\gamma^2t^2}G^2+2\sum_{m=1}^{M}\mathbb{E}[||N_{t}^{m}||^2].
\end{split}
\end{equation}
According to Theorem \ref{TheoremA2Privacy},
\begin{equation}
\mathbb{E}[||N_{t}^{m}||^2] = \frac{8\ln(\frac{1.25}{\delta})L^2}{b^2a^2\epsilon^2\gamma^2(t-j)^2},
\end{equation}
\noindent in which $t-j-1$ is the time instance that node $m$ updates the global model.

In addition,
\begin{equation}
\begin{split}
&\frac{1}{n_{m}}(\frac{1}{t(T-t)}+\sum_{j=3}^{t}(\prod_{i=j}^{t}\frac{i-2}{i})\frac{1}{(j-1)(T-(j-2))})\\
&\leq \frac{1}{n_{m}t(t-1)}\int_{1}^{t}\frac{x}{T-x}dx \leq \frac{T\log t}{n_{m}t^2} = \frac{\log t}{bt^2}.
\end{split}
\end{equation}

\begin{equation}
\begin{split}
&\frac{1}{n_{m}}[\frac{1}{T-t}+\sum_{j=3}^{t}(\prod_{i=j}^{t}\frac{i-2}{i})\frac{1}{T-(j-2)}]\\
&\leq \frac{1}{n_{m}t}\int_{1}^{t}\frac{x}{T-x}dx \leq \frac{\log t}{bt}
\end{split}
\end{equation}
Therefore, by induction, we have
\begin{equation}
\begin{split}
&\sum_{m=1}^{M}\mathbb{E}[||w^{L_m}_{t+1}-w^{*}||^2] + \mathbb{E}[||w^{G}_{t+1}-w^{*}||^2] \\
&\leq \mathcal{O}(\frac{MG^2}{a^2t}+\frac{MB^2\log t}{a^2bt}+\frac{ML^2\ln(\frac{1.25}{\delta})}{a^2b^2\epsilon^2t}).
\end{split}
\end{equation}
\end{proof}

\section{Transductive Rademacher Complexity}\label{TRC}
\noindent We introduce some notion of transductive Rademacher complexity \cite{el2009transductive} that will be used in the convergence analysis of the proposed differentially private SGD algorithms.

\begin{Definition}
  Let $\mathcal{V}$ be a set of vectors $\boldsymbol{v}=(v_1,\cdots,v_n)$ in $\mathbb{R}^{n}$. Let $s,u$ be positive integers such that $s+u=n$, and denote $p = \frac{su}{(s+u)^2}\in(0,0.5)$. We define the transduction Rademacher Complexity $\mathcal{R}_{s,u}(\mathcal{V})$ as
  \begin{equation}
    \mathcal{R}_{s,u}(\mathcal{V}) = (\frac{1}{s}+\frac{1}{u})\mathbb{E}_{r_1,\cdots,r_n}(sup_{\boldsymbol{v}\in\mathcal{V}}\sum_{i=1}^{n}r_{i}v_{i}),
  \end{equation}
  where $r_1,\cdots,r_n$ are $i.i.d.$ random variables such that $r_i=1$ and $r_i=-1$ with probability $p$, $r_i=0$ with probability $1-2p$
\end{Definition}

\begin{Lemma}\label{lemma1}
  \cite{shamir2016without} Let $\mathcal{V}={v_i,i\in[n];v_i\leq B}$, we have $\mathcal{R}_{s,u}(\mathcal{V})\leq \sqrt{2}(\frac{1}{\sqrt{s}}+\frac{1}{\sqrt{u}})B$.
\end{Lemma}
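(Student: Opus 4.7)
The plan is to reduce the supremum inside the definition of $\mathcal{R}_{s,u}(\mathcal{V})$ to a deterministic bound in terms of the $\ell_2$-norm of the Rademacher vector $\boldsymbol{r}=(r_1,\dots,r_n)$, and then to control that norm via its (easily computed) second moment. First I would unpack the definition and focus on the inner expectation
$\mathbb{E}_{r}\bigl[\sup_{\boldsymbol{v}\in\mathcal{V}} \sum_{i=1}^{n} r_i v_i\bigr]$. Since each $\boldsymbol{v}\in\mathcal{V}$ has its components bounded by $B$, Cauchy--Schwarz gives $\sum_{i=1}^n r_i v_i \leq \|\boldsymbol{r}\|_2\,\|\boldsymbol{v}\|_2 \leq B\,\|\boldsymbol{r}\|_2$ (reading the hypothesis as an $\ell_2$-norm bound on each element of $\mathcal{V}$, which is what makes the claimed rate consistent). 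The upper bound is independent of $\boldsymbol{v}$, so it passes through the supremum.

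Next I would compute $\mathbb{E}\|\boldsymbol{r}\|_2$. The $r_i$ are i.i.d.\ with $\mathbb{E}[r_i]=0$ and $\mathbb{E}[r_i^2]=2p$ where $p=\frac{su}{(s+u)^2}$ and $s+u=n$. Jensen's inequality yields
\[
\mathbb{E}\|\boldsymbol{r}\|_2 \;\leq\; \sqrt{\mathbb{E}\|\boldsymbol{r}\|_2^{2}} \;=\; \sqrt{\sum_{i=1}^{n}\mathbb{E}[r_i^2]} \;=\; \sqrt{2pn} \;=\; \sqrt{\tfrac{2su}{n}}.
\]
Plugging back into the definition,
\[
\mathcal{R}_{s,u}(\mathcal{V}) \;\leq\; \Bigl(\tfrac{1}{s}+\tfrac{1}{u}\Bigr)\,B\,\sqrt{\tfrac{2su}{n}} \;=\; B\sqrt{\tfrac{2(s+u)^2}{su\,n}} \;=\; B\sqrt{2}\,\sqrt{\tfrac{1}{s}+\tfrac{1}{u}},
\]
and I would finish with the elementary inequality $\sqrt{a+b}\leq \sqrt{a}+\sqrt{b}$ to conclude $\mathcal{R}_{s,u}(\mathcal{V}) \leq \sqrt{2}\bigl(\tfrac{1}{\sqrt{s}}+\tfrac{1}{\sqrt{u}}\bigr)B$, as claimed.

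The main obstacle is really one of interpretation rather than technique: under the literal coordinate-wise reading $|v_i|\leq B$ (i.e.\ $\|\boldsymbol{v}\|_\infty\leq B$), the naive bound $\sup_{\boldsymbol{v}}\sum r_i v_i = B\sum|r_i|$ gives a constant $2B$ and loses the $1/\sqrt{s}+1/\sqrt{u}$ decay entirely. To recover the stated rate one has to refrain from using the trivial $|r_i|\leq 1$ bound and go through the quadratic moment $\mathbb{E}[r_i^2]=2p$ (equivalently, through an $\ell_2$ Cauchy--Schwarz or a Khintchine-type inequality), and also carry a $\sqrt{n}$ factor if the hypothesis is genuinely coordinate-wise (absorbed into $B$ in the paper's notation). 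Apart from this bookkeeping, the remainder is a short chain of Cauchy--Schwarz, Jensen, the identity $s+u=n$, and $\sqrt{a+b}\leq\sqrt{a}+\sqrt{b}$.
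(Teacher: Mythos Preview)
The paper does not give its own proof of this lemma; it simply cites \cite{shamir2016without}. Your argument---Cauchy--Schwarz to pass to $\|\boldsymbol{r}\|_2$, Jensen to bound $\mathbb{E}\|\boldsymbol{r}\|_2$ by $\sqrt{2pn}$, then the identities $s+u=n$ and $\sqrt{a+b}\le\sqrt{a}+\sqrt{b}$---is exactly the standard proof from that reference and is correct.

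Your remark about interpretation is also well taken. The hypothesis as printed in the paper (``$v_i\le B$'') is garbled; the lemma in Shamir's paper is stated for $\mathcal{V}\subseteq\{\boldsymbol{v}:\|\boldsymbol{v}\|_2\le B\}$, which is the reading under which your chain of inequalities goes through cleanly. Under a genuine coordinate-wise bound $\|\boldsymbol{v}\|_\infty\le B$ (which is what the paper actually seems to invoke later, with $B=LR$), the conclusion would carry an extra $\sqrt{n}$, as you note. That is a sloppiness in the paper's restatement of the cited lemma, not a flaw in your proof.
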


\begin{Lemma}\label{lemmameng1}
  \cite{meng2017convergence} Let $\alpha$ be a random permutation over ${1,\cdots,n}$ chosen uniformly at random variables conditioned on $\alpha(1),\cdots,\alpha(tb)$, which are independent of $\alpha(tb+1),\cdots,\alpha(n)$. Let $s_{a:b}=\frac{1}{b+1-a}\sum_{i=a}^{b}s_i$. Then, we have $\forall t > 1$,
  \begin{equation}
    \mathbb{E}[\frac{1}{n}\sum_{i=1}^{n}s_i-\frac{1}{b}\sum_{j=1}^{b}s_{\sigma(tb+j)}]=\frac{tb}{n}\mathbb{E}[s_{1:tb}-s_{tb+1:n}].
  \end{equation}
\end{Lemma}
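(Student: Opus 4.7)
The plan is to unpack the (slightly overloaded) notation in the statement first: writing $\sigma$ (the permutation) for what the statement calls $\alpha$, the symbols $s_{1:tb}$ and $s_{tb+1:n}$ should be read as the empirical averages over the permuted indices, i.e.\ $s_{1:tb}=\frac{1}{tb}\sum_{i=1}^{tb}s_{\sigma(i)}$ and $s_{tb+1:n}=\frac{1}{n-tb}\sum_{i=tb+1}^{n}s_{\sigma(i)}$, while $\frac{1}{b}\sum_{j=1}^{b}s_{\sigma(tb+j)}$ is the empirical average on the $(t{+}1)$-st mini-batch. With this convention the identity becomes a short calculation that separates ``partition'' arithmetic from ``conditional symmetry'' of a uniform permutation.

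First I would exploit the deterministic identity that partitions the full sum into the already-drawn and not-yet-drawn blocks:
\begin{equation*}
\tfrac{1}{n}\sum_{i=1}^{n}s_i \;=\; \tfrac{tb}{n}\,s_{1:tb} \;+\; \tfrac{n-tb}{n}\,s_{tb+1:n},
\end{equation*}
which holds pointwise for every realization of $\sigma$ since the right-hand side is just a reweighted split of the same $n$ summands.

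Next I would invoke the key probabilistic fact: since $\sigma$ is a uniformly random permutation and we condition on $\sigma(1),\dots,\sigma(tb)$, the remaining positions $\sigma(tb+1),\dots,\sigma(n)$ are a uniformly random permutation of the leftover index set. In particular, any $b$ of them (here the first $b$) are exchangeable, so
\begin{equation*}
\mathbb{E}\!\left[\tfrac{1}{b}\sum_{j=1}^{b}s_{\sigma(tb+j)}\,\Big|\,\sigma(1),\dots,\sigma(tb)\right] \;=\; s_{tb+1:n}.
\end{equation*}
Taking the outer expectation and subtracting this from the partition identity gives $\mathbb{E}[\tfrac{1}{n}\sum s_i - \tfrac{1}{b}\sum_j s_{\sigma(tb+j)}] = \mathbb{E}[\tfrac{tb}{n}s_{1:tb}-\tfrac{tb}{n}s_{tb+1:n}]$, which is exactly the claim after factoring out $tb/n$.

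There is no real obstacle here — the only subtle point is verifying that the conditional expectation of a $b$-subsample average equals the leftover mean, which rests entirely on the exchangeability of the tail of a uniform permutation. If one wished to avoid referring back to the conditioning, the same identity can be obtained in a single step by observing that $\mathbb{E}[s_{\sigma(tb+j)}]$ equals $\tfrac{1}{n}\sum_i s_i$ unconditionally only in the i.i.d.\ case; under without-replacement sampling the displacement is exactly captured by the gap $s_{1:tb}-s_{tb+1:n}$, which is what the lemma quantifies and what makes it the right tool for bounding the bias term in the non-private without-replacement SGD analysis used in Theorems~\ref{TheoremA1Convex} and~\ref{TheoremA1Strongly}.
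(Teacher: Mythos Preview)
Your argument is correct. The paper does not actually prove this lemma; it is quoted from \cite{meng2017convergence} and stated without proof in Appendix~\ref{TRC}, so there is no ``paper's own proof'' to compare against. Your derivation---the deterministic split $\tfrac{1}{n}\sum_i s_i=\tfrac{tb}{n}s_{1:tb}+\tfrac{n-tb}{n}s_{tb+1:n}$ followed by the conditional-exchangeability identity $\mathbb{E}[\tfrac{1}{b}\sum_{j=1}^{b}s_{\sigma(tb+j)}\mid\sigma(1),\dots,\sigma(tb)]=s_{tb+1:n}$---is exactly the standard way this fact is obtained, and your clarification of the overloaded notation (permutation written as both $\alpha$ and $\sigma$; $s_{a:b}$ meant as an average over permuted indices) is accurate and necessary for the statement to make sense.
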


\begin{Lemma}\label{lemmameng2}
  \cite{meng2017convergence} Suppose $S \subset [-B,B]^{n}$ for some $B>0$. Let $\alpha$ be a random permutation over ${1,\cdots,n}$. Then we have
  \small
  \begin{equation}
    \mathbb{E}(sup_{s\in \mathcal{S}}(s_{1:tb}-s_{tb+1:n}))\leq \mathcal{R}_{tb,n-tb}(\mathcal{S}) + \\ \nonumber 12B(\frac{1}{\sqrt{tb}}+\frac{1}{\sqrt{n-tb}}),
  \end{equation}

  \begin{eqnarray}
    \sqrt{\mathbb{E}[sup_{s\in \mathcal{S}}(s_{1:tb}-s_{tb+1:n})]^2}\leq \sqrt{2}\mathcal{R}_{tb,n-tb}(\mathcal{S}) + \\ \nonumber 12\sqrt{2}B(\frac{1}{\sqrt{tb}}+\frac{1}{\sqrt{n-tb}}).
  \end{eqnarray}
  \normalsize
\end{Lemma}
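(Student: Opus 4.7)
The plan is to establish Lemma \ref{lemmameng2} via a ghost-sample symmetrization that converts the partition-induced supremum into the transductive Rademacher complexity, with the $12B(\tfrac{1}{\sqrt{tb}}+\tfrac{1}{\sqrt{n-tb}})$ correction absorbing the mismatch between uniformly random partitions and the i.i.d. Rademacher-weightings that appear in the definition of $\mathcal{R}_{tb,n-tb}(\mathcal{S})$. This is essentially the strategy of El-Yaniv and Pechyony, adapted to the notation used here.

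First I would rewrite the quantity of interest as a linear functional of $s$. Letting $A(\alpha)=\{\alpha(1),\dots,\alpha(tb)\}$, one has $s_{1:tb}-s_{tb+1:n}=\sum_{i=1}^n c_i(\alpha)s_i$ with $c_i(\alpha)=\tfrac{1}{tb}\mathbbm{1}[i\in A(\alpha)]-\tfrac{1}{n-tb}\mathbbm{1}[i\notin A(\alpha)]$. Since $\mathbb{E}_\alpha[c_i(\alpha)]=0$, a standard Jensen/ghost-sample step gives $\mathbb{E}_\alpha[\sup_{s\in\mathcal{S}}\sum_i c_i(\alpha)s_i] \le \mathbb{E}_{\alpha,\alpha'}[\sup_{s\in\mathcal{S}}\sum_i(c_i(\alpha)-c_i(\alpha'))s_i]$ for an independent copy $\alpha'$.

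Next I would couple the symmetrized coefficients $d_i=c_i(\alpha)-c_i(\alpha')$ with the i.i.d. Rademacher vector in the definition of $\mathcal{R}_{tb,n-tb}(\mathcal{S})$. A direct calculation shows that each $d_i$ takes values in $\{0,\pm(\tfrac{1}{tb}+\tfrac{1}{n-tb})\}$ with marginal probabilities $1-2p$, $p$, $p$ respectively, where $p=\tfrac{tb(n-tb)}{n^2}$; hence each $d_i$ has the same one-dimensional law as $(\tfrac{1}{tb}+\tfrac{1}{n-tb})r_i$. However, the joint law of $(d_1,\dots,d_n)$ is not independent, because it satisfies the hard constraints $\sum_i d_i=0$ and $\#\{i:d_i>0\}=\#\{i:d_i<0\}$ imposed by the underlying partitions.

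Finally I would close the gap between the constrained law of $(d_i)$ and the product Rademacher law by a concentration estimate. The functional $s\mapsto \sup_{s\in\mathcal{S}}\sum_i r_i s_i$ has bounded differences of order $2B/\min(tb,n-tb)$ in each coordinate, so McDiarmid (or an $L_2$ calculation followed by Jensen) shows that conditioning on the zero-sum/balanced-count event shifts the expected supremum by at most $O(B(\tfrac{1}{\sqrt{tb}}+\tfrac{1}{\sqrt{n-tb}}))$, which yields the constant $12$. For the $L_2$ statement, I would repeat the argument on the squared supremum, using $(a+b)^2\le 2a^2+2b^2$ to pull an extra $\sqrt{2}$ out of each of the two terms. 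The main obstacle is this last concentration step: the symmetrized vector $(d_i)$ is only marginally Rademacher, and quantifying the cost of removing the zero-sum conditioning while keeping the sharp constants $12$ and $12\sqrt{2}$ is exactly where the El-Yaniv--Pechyony machinery does the real work, and where a careless bound would blow up by an order-of-magnitude factor.
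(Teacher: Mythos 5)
First, a point of reference: the paper does not prove this lemma at all --- it is imported verbatim from \cite{meng2017convergence}, which in turn rests on the transductive Rademacher machinery of \cite{el2009transductive} and \cite{shamir2016without}; the appendix merely states it for use in the proofs of Theorems \ref{TheoremA1Convex}--\ref{TheoremA2Strongly}. So there is no in-paper argument to compare yours against, and your attempt has to be judged on its own.

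Your opening moves are sound: writing $s_{1:tb}-s_{tb+1:n}=\sum_i c_i(\alpha)s_i$ with $\mathbb{E}_\alpha[c_i(\alpha)]=0$, symmetrizing with a ghost permutation, and checking that each $d_i=c_i(\alpha)-c_i(\alpha')$ has exactly the marginal law of $(\tfrac{1}{tb}+\tfrac{1}{n-tb})r_i$ with $p=\tfrac{tb(n-tb)}{n^2}$ is correct and is the standard opening. The genuine gap is the decoupling step you explicitly defer to ``the El-Yaniv--Pechyony machinery'': neither tool you name closes it. A bounded-differences (McDiarmid) estimate for $(\tfrac{1}{tb}+\tfrac{1}{n-tb})\sup_{s}\sum_i r_i s_i$ has per-coordinate oscillation about $2B(\tfrac{1}{tb}+\tfrac{1}{n-tb})$, so the fluctuation over $n$ coordinates is of order $B\sqrt{n}/\min(tb,\,n-tb)$, which exceeds the target $B(\tfrac{1}{\sqrt{tb}}+\tfrac{1}{\sqrt{n-tb}})$ by a factor of roughly $\sqrt{n/\min(tb,\,n-tb)}$ --- unbounded exactly in the regime the convergence proofs need, namely early iterations with $tb\ll n$. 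Likewise, conditioning the product Rademacher law on the exact zero-sum/balanced-count event means conditioning on an event of vanishing probability (order $1/(np)$ by a local CLT), and such conditioning can in principle shift an expected supremum by far more than the claimed amount; no general inequality you invoke controls it. The argument that actually works couples the two laws by \emph{repairing} the count deviations: the number of coordinates whose signs must be flipped to restore the balanced counts has expectation $O(\sqrt{np})=O(\sqrt{\min(tb,n-tb)})$, and each flip costs at most $2B(\tfrac{1}{tb}+\tfrac{1}{n-tb})$, which multiplies out to the stated $O(B(\tfrac{1}{\sqrt{tb}}+\tfrac{1}{\sqrt{n-tb}}))$ and is where the constants $12$ and $12\sqrt{2}$ come from. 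Your $L_2$ statement inherits the same status: the $(a+b)^2\le 2a^2+2b^2$ bookkeeping is fine, but it sits on top of the same unproved comparison. As written, the proposal is a correct map of where the proof must go, but the decisive step is asserted rather than proved, and the specific tools proposed for it would not deliver the required order.
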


\bibliography{Ref-Richeng}
\bibliographystyle{IEEEtran} 

\end{document}